\documentclass{article}
\usepackage{arxiv}
\usepackage{times}
\usepackage{natbib}
\setcitestyle{authoryear,round,citesep={;},aysep={,},yysep={;}}
\usepackage{xcolor}
\usepackage[T1]{fontenc}
\usepackage{amsmath,amssymb,amsthm,bm}
\newtheorem{theorem}{Theorem}
\newtheorem{lemma}{Lemma}
\newtheorem{corollary}{Corollary}
\newtheorem{proposition}{Proposition}
\newtheorem{assumption}{Assumption}
\newtheoremstyle{boldremark}
  {3pt}{3pt}{\normalfont}{}{\bfseries}{.}{0.5em}
  {\thmname{#1}\thmnumber{ #2}\thmnote{ {\normalfont(#3)}}}
\theoremstyle{boldremark}
\newtheorem{remark}{Remark}
\newenvironment{restated}[2]{\par\smallskip\noindent\textbf{#1~\ref*{#2}.}\enspace\itshape\ignorespaces}{\par\smallskip}
\usepackage{booktabs}
\usepackage{colortbl}
\definecolor{oursbg}{HTML}{EAF1FB}
\newcommand{\ours}{\cellcolor{oursbg}}
\newsavebox{\privtop}
\newsavebox{\privbot}
\usepackage{enumitem}
\usepackage{multirow}
\usepackage{array}
\usepackage{longtable}
\usepackage{graphicx}
\usepackage{float}
\usepackage{wrapfig}
\usepackage{placeins}
\usepackage{needspace}
\usepackage{balance}
\usepackage{caption}
\AtBeginDocument{%
  \setlength{\abovedisplayskip}{4pt plus 1pt minus 1pt}%
  \setlength{\belowdisplayskip}{4pt plus 1pt minus 1pt}%
  \setlength{\abovedisplayshortskip}{2pt plus 1pt}%
  \setlength{\belowdisplayshortskip}{2pt plus 1pt}}
\usepackage[ruled,vlined,longend]{algorithm2e}
\SetKwFor{For}{for}{do}{end for}
\SetKwFor{ForEach}{for \normalfont each}{do}{end for}
\usepackage{hyperref}
\hypersetup{colorlinks=true,allcolors=blue}

\definecolor{stdgray}{gray}{0.5}
\newcommand{\std}[1]{{\tiny\textcolor{stdgray}{$\pm$#1}}}
\newcommand{\B}{\mathcal{B}}
\newcommand{\D}{\mathcal{D}}
\newcommand{\Cset}{\mathcal{C}}
\newcommand{\vz}{\bm{z}}
\newcommand{\vm}{\bm{m}}
\newcommand{\vr}{\bm{r}}
\newcommand{\vmu}{\bm{\mu}}
\newcommand{\vu}{\bm{u}}
\newcommand{\vxi}{\bm{\xi}}
\DeclareMathOperator*{\argmin}{arg\,min}

\title{ReSCENE: Server-Side Replay for Structural Mitigation of
Catastrophic Forgetting in Federated Continual Learning}

\renewcommand{\undertitle}{}
\date{}

\author{%
\begin{tabular}{c}
\bf Sungmin Kang\textsuperscript{1} \quad
Zhengzhong Tu\textsuperscript{1,$\dagger$} \quad
Sunwoo Lee\textsuperscript{2,$\dagger$} \\
\\[-0.6em]
\normalfont
\textsuperscript{1}Texas A\&M University \quad
\textsuperscript{2}Inha University \\
\normalfont \texttt{sungmin.kang@tamu.edu}
\end{tabular}
}

\begin{document}
\maketitle
\thispagestyle{fancy}
{\renewcommand{\thefootnote}{\fnsymbol{footnote}}\footnotetext[2]{Equal advising.}}

\begin{abstract}

Federated continual learning must integrate new tasks over time without losing 
earlier-task knowledge.
Most existing methods attach an anti-forgetting mechanism to the client-trained, 
server-aggregated loop of federated learning, which holds back new learning 
to preserve earlier knowledge and burdens resource-constrained clients.
We propose ReSCENE, which structurally
mitigates catastrophic forgetting by having each client upload a small condensed
surrogate of its local data while the server keeps the surrogates of past
tasks and trains the global model on them together with the current task surrogates.
For efficient server memory, we introduce \textit{temporal herding}, which selects the
more recent surrogates from the pool accumulated over a task into a compressed
buffer.
Our study provides a theoretical analysis showing that this buffer can represent 
the original task data more closely than full accumulation of all surrogates.
Across CIFAR-10, CIFAR-100,
and TinyImageNet, ReSCENE achieves the strongest accuracy over seven baselines, by up
to $31.1$ points of average accuracy, while requiring as little as $0.11\times$ of the client
computation and up to $179\times$ less upload than the model-update baselines.
 ReSCENE further demonstrates its effectiveness when scaled to larger client populations
 and larger models while remaining efficient, which makes it a practical method for 
 federated continual learning.
\end{abstract}

\section{Introduction}

Federated learning (FL) has emerged as a privacy-preserving paradigm that trains 
a shared model
across data owners without centralizing their data~\citep{pmlr-v54-mcmahan17a}.
Standard FL assumes that each client's local data is static, yet in practice 
new classes keep arriving and the data distribution shifts over time. 
Thus, the shared model must keep learning from new data while retaining
previously acquired knowledge. This dynamic setting within a federated
environment is \textit{federated continual learning} (FCL)~\citep{pmlr-v139-yoon21b}. 
Each time the model trains on a new task, earlier knowledge is progressively
overwritten as its parameters shift toward the new task, so the model fails
on the tasks it learned before.
This loss of earlier-task ability is
\textit{catastrophic forgetting}~\citep{McCloskey1989CatastrophicII, ICLR2024_cf53f12a}, 
and mitigating it is the core problem of FCL.

Many FCL methods counter forgetting by adding a continual-learning mechanism to
FL.
\textit{Regularization-based} methods restrain updates to the
parameters important to earlier tasks~\citep{doi:10.1073/pnas.1611835114,pmlr-v267-li25cq}, and
\textit{distillation-based} methods have the current model imitate a previous-task
teacher~\citep{10.1109/TPAMI.2017.2773081,ijcai2022p303}. \textit{Replay-based} methods rehearse past data, by storing exemplars on the
clients~\citep{Dong_2022_CVPR,Li_2024_CVPR} or by synthesizing
it with a generator~\citep{Zhang_2023_ICCV,10612802}. 
These mechanisms preserve earlier-task knowledge, but in exchange 
they either hold back new learning or load resource-constrained clients
with extra memory and computation. Moreover, they only slow the erasure, so
as the task sequence grows~\citep{Li_2024_CVPR}, the capability on earlier tasks
keeps degrading and performance drops.

Instead of adding the burden to resource-constrained clients, 
we shift the anti-forgetting responsibility to the resource-rich
server~\citep{10.1561/2200000083}. 
Each client condenses its local data into a small set of surrogates and uploads
it, as in surrogate-aggregation FL (SA-FL)~\citep{Xiong_2023_CVPR,Wang_2024_CVPR}.
The server keeps these earlier-task surrogates at no additional communication cost,
and trains the global model on them together with those of the current task when a 
new task arrives. 
Therefore, old and new tasks are optimized at once, 
which structurally mitigates catastrophic
forgetting (Figure~\ref{fig:method_overview}). 
Whereas prior SA-FL trains on each
surrogate once and discards it, we turn
the accumulated ones into a replay source for continual learning.
We call this method ReSCENE (\textit{re}\,+\,\textit{scene}), since the server 
re-sees the scene of previous tasks while it learns the current one.

Keeping every surrogate, however, would take up too much server memory, and not
all of them are equally worth keeping. 
Within a task, the global model
and the surrogates co-evolve, since each round's surrogates train the model
used for condensation in the next round. Surrogates from later rounds thus
represent the task more faithfully. 
We therefore introduce
\textit{temporal herding}, which compresses each task into a small buffer
centered on the recent rounds,
selecting from its whole pool by
herding~\citep{10.1145/1553374.1553517}.
Our theoretical analysis shows that this buffer can represent the original
task data more closely than keeping the whole pool, and 
we empirically confirm this advantage on longer task sequences while using 
less server memory and computation.

We evaluate ReSCENE on CIFAR-10, CIFAR-100, and TinyImageNet at three
heterogeneity levels, following the federated class-incremental learning
(FCIL)~\citep{Dong_2022_CVPR} protocol, in which each task introduces new classes
and the model is evaluated on all seen classes. ReSCENE demonstrates the
strongest performance among seven FCL baselines, reaching $67.96\%$ AA /
$73.53\%$ AIA on CIFAR-10 and $49.80\%$ / $57.71\%$ on CIFAR-100, exceeding the
strongest baseline by $17.2$ and $8.2$ AA points. It is also far more efficient on the
client side, requiring as little as $0.11\times$ of the client computation of FedAvg and an
upload up to $179\times$ smaller than a model update. We further show that
ReSCENE remains effective with larger client populations and larger models while
keeping this efficiency. Overall, ReSCENE provides a practical and effective
solution for FCL, balancing strong performance with
significantly lower client-side computation and communication. We summarize
our main contributions as follows:

\begin{figure*}[t]
\centering
\includegraphics[width=\textwidth]{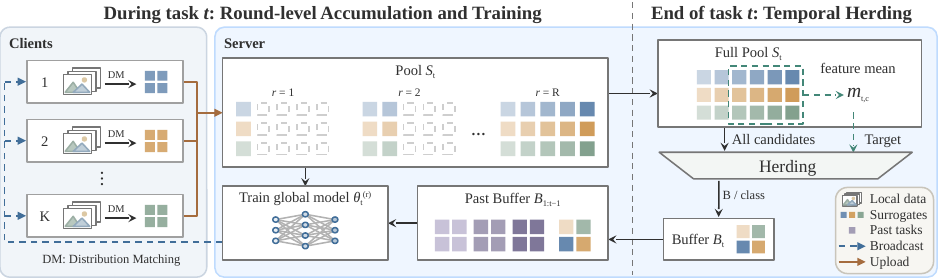}
\vspace{-1.15em}
\caption{Illustration of ReSCENE. During task $t$, clients upload condensed
surrogates every round, and the server trains the global model on the accumulated
pool together with the buffers of earlier tasks. At the end of the task, temporal
herding compresses the pool into a fixed-size buffer centered on the recent rounds,
which joins the past buffer. Darker tiles denote more recent rounds.}
\label{fig:method_overview}
\vspace{-2pt}
\end{figure*}
\begin{itemize}[leftmargin=1.2em,itemsep=1pt,topsep=2pt,parsep=0pt]
\item We propose ReSCENE, which keeps the surrogates of earlier tasks on the server
and trains the global model on them together with those of the current task,
structurally mitigating forgetting.
\item We introduce temporal herding, which compresses the retained surrogates into a
bounded buffer centered on the recent generations, and prove that this buffer
can represent the original task data more closely than full accumulation.
\item We demonstrate the effectiveness and practicality of ReSCENE on three FCIL
benchmarks, where it
outperforms seven baselines at far lower client cost and keeps its advantage with
larger client populations and larger models.
\end{itemize}

\section{Related Work}
\label{sec:related}

\noindent\textbf{Anti-forgetting in FCL.}\quad
FCL methods can be categorized into regularization-, distillation-, and
replay-based families~\citep{10.1016/j.neucom.2025.130844}.
Regularization-based methods restrict changes to the parameters that
contributed most to earlier tasks~\citep{doi:10.1073/pnas.1611835114,
pmlr-v267-li25cq}, which leaves little room for new knowledge as tasks continue.
Distillation-based methods use the model trained on earlier tasks as a teacher
and make the current model match its predictions on 
old tasks~\citep{10.1109/TPAMI.2017.2773081,ijcai2022p303}. 
Since the teacher is stale, the model is pulled back to earlier states, which hinders
the continual learning process. 
Replay-based methods rehearse earlier-task data, by storing exemplars on the
clients~\citep{Dong_2022_CVPR,Li_2024_CVPR}
or by synthesizing them with a generator~\citep{Zhang_2023_ICCV,10612802}.
Exemplar methods retain earlier knowledge well, since the client trains directly
on raw data of earlier tasks, but they add computation
and consume scarce memory of resource-poor clients.
Generative methods store no raw data and can produce as many past-class samples
as training needs, but the generated samples often drift from the real distribution
in the heterogeneous federated setting and become a poor substitute for the
real data.
We discuss each
family in detail in
Appendix~\ref{app:related}.

\noindent\textbf{Surrogate-aggregation FL.}\quad
Surrogate-aggregation FL (SA-FL) has been introduced as an alternative to
traditional model-transmitting FL. 
FedDM~\citep{Xiong_2023_CVPR} has each client
condense its local data into a small set of surrogates and upload them, and the server trains the global model on the pooled set.
FedAF~\citep{Wang_2024_CVPR} follows the same structure and lets each client synthesize surrogates that reflect the global data distribution, which reduces
client drift and tackles data heterogeneity. 
Both study static training, where the local data for each client is fixed.
The server thus trains on each surrogate once and then discards it, so the
information that clients have already sent is used only once.
ReSCENE instead keeps them at the server without any further
communication cost and replays them as later tasks arrive. 
\section{Method}
\label{sec:method}

We propose ReSCENE, which replays the surrogates of earlier tasks at the server to
structurally mitigate catastrophic forgetting. In every round, the participating
clients condense their local data into surrogates with the frozen global model and
upload them, and the server trains the global model on the surrogates of the
current task together with the buffers of earlier tasks (Sec.~\ref{sec:safl})
under a class-balanced objective (Sec.~\ref{sec:imbalance}). At the end of each
task, the server compresses the surrogates of the task into a buffer by temporal
herding (Sec.~\ref{sec:temporal_herding}). We provide a theoretical analysis of how
closely this buffer represents the task in Sec.~\ref{sec:theory_short}.
Figure~\ref{fig:method_overview} and Algorithm~\ref{alg:main} summarize the full
procedure.

\vspace{-3pt}
\subsection{Problem Setup and Notation}
\label{sec:setup_method}

We consider federated class-incremental learning with $N$ clients and $T$ tasks, each of $R$ rounds. Task $t$
introduces a disjoint set of new classes $\Cset_t$, from which client $k$ draws
its local data $\D_t^k$.
In each round $r$ of task $t$, a subset $\mathcal K_{t,r}$ of $K$ clients
participates.
We denote the global model by $f_\theta$ and its feature extractor by $\phi_\theta$,
with parameters $\theta_t^{(r)}$ after round $r$ of task $t$.
On the server, $S_t$ is the pool of surrogates collected during task $t$, 
$\B_t$ is the replay buffer kept from task $t$, and $\B_{1:t}=\bigcup_{s\le t}\B_s$ 
denotes the buffers of tasks $1$ through $t$.

\vspace{-3pt}
\subsection{Surrogate Aggregation for Federated Continual Learning}
\label{sec:safl}

\noindent\textbf{Surrogate aggregation.}\quad
Unlike conventional model-aggregation FL, clients in ReSCENE communicate
condensed surrogates of their local data~\citep{Xiong_2023_CVPR,Wang_2024_CVPR}. In each round $r$ of task $t$,
client $k$ condenses its local data by distribution
matching~\citep{Zhao_2023_WACV}. For each class $c$, it optimizes a small
synthetic set $S_{t,c}^{k,(r)}$ so that its mean embedding and mean logits match
those of the client's real data of that class,
$\D_{t,c}^k$, using the frozen global model
$\theta_t^{(r-1)}$,
\begin{equation}
\min_{S_{t,c}^{k,(r)}}\; \mathbb{E}_{\theta'}\!\left[
\big\|\overline{\phi_{\theta'}}(S_{t,c}^{k,(r)})-
\overline{\phi_{\theta'}}(\D_{t,c}^k)\big\|^2
+\big\|\overline{f_{\theta'}}(S_{t,c}^{k,(r)})-
\overline{f_{\theta'}}(\D_{t,c}^k)\big\|^2
\right],
\label{eq:dm}
\end{equation}
where $\overline{(\cdot)}$ is the average over the set. This objective trains
the synthetic set so that its loss matches the client's local loss. Since matching the two over the whole parameter space is infeasible, we take the
expectation over $\theta'$, which is $\theta_t^{(r-1)}$ perturbed by isotropic
Gaussian noise within a ball of radius $\rho$~\citep{Xiong_2023_CVPR}. The surrogates thus remain a faithful substitute for the real data
within the ball. Each surrogate set $S_{t,c}^{k,(r)}$ contains a fixed number of
images per class ($\mathrm{IPC}$), regardless of how much local data the class
has or how large the model is. The client then uploads the collected surrogates
$S_t^{k,(r)}=\bigcup_{c}S_{t,c}^{k,(r)}$ to the server.
In Appendix~\ref{app:privacy}, we analyze the differential privacy of these
synthetic sets and the accuracy under gradient perturbation.

\begin{figure}[t]
\noindent\begin{minipage}[t]{0.40\textwidth}
\vspace{0pt}
\centering
\includegraphics[width=\linewidth]{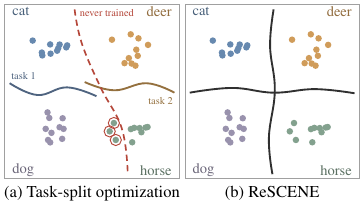}
\caption{Schematic of the example in Sec.~\ref{sec:safl}. Solid
curves denote decision boundaries, and the red dashed curve is the cross-task
boundary that task-split optimization never trains.}
\label{fig:joint}
\end{minipage}\hfill
\begin{minipage}[t]{0.57\textwidth}
\vspace{-6pt}
\input{sections/3_algorithm}
\end{minipage}
\end{figure}
\noindent\textbf{Structural mitigation of forgetting.}\quad
ReSCENE makes full use of the surrogates that SA-FL trains on once and discards,
without any further communication cost.
The uploaded surrogates join the pool $S_t$ of the current task, on which the server trains the global model in every round. At
the end of task $t$, the server keeps the task's pool as a compressed buffer $\B_t$, which Sec.~\ref{sec:temporal_herding} describes. The server thus
retains access to previous tasks and trains the global model on
$\B_{1:t-1}\cup S_t$, so the earlier tasks are optimized together with the
current task, which structurally mitigates catastrophic forgetting.

\noindent\textbf{One objective over all seen classes.}\quad
The goal of FCL is to minimize the risk over all classes of tasks $1$ through $t$. 
Existing
methods split this problem into a sequence of subproblems, each the risk over
the classes of one task, and merge the knowledge of the separate subproblems
into a single model. 
The merged solution does not add up to the solution of the original problem, 
since the classes of different tasks are
never optimized against each other. 
ReSCENE optimizes the original problem
itself, since the server's training set covers every seen class in one
objective. 
Consider task~1 with classes $\{\text{cat},\text{dog}\}$ and task~2 with
classes $\{\text{deer},\text{horse}\}$. Under task-split optimization, prior methods learn to separate cat from dog and deer from horse, yet what is asked after
task~2 is the four-way classification, including the boundaries
between cat and deer or between dog and horse, which have never been learned
(Figure~\ref{fig:joint}a).
In ReSCENE, by contrast, the server trains on the pooled surrogates of both
tasks, so it learns to tell cat, dog, deer, and horse apart at once and acquires
a discrimination ability that spans all seen classes (Figure~\ref{fig:joint}b).

\subsection{Temporal Herding}
\label{sec:temporal_herding}

\noindent\textbf{Co-evolution of the model and the surrogates.}\quad
Within a task, the global model and the surrogates co-evolve. 
In each round, the clients condense their local data using the model trained on
the surrogates of the previous rounds, and the server then trains that model further on the new
surrogates. Since the global model improves over the rounds, the clients of later
rounds optimize their surrogates under a better model, so the surrogates of later rounds
represent the local data more faithfully than those of earlier rounds.

\noindent\textbf{Temporal herding.}\quad
Keeping all surrogates at the server would let its memory grow with
every round and every task. 
We therefore introduce temporal herding, which
compresses the pool of a finished task into a bounded buffer by herding~\citep{10.1145/1553374.1553517}.
Following the co-evolution observation, we take the recent surrogates as the
target and select the candidates from the whole pool.

Let $S_{t,c}$ be the class-$c$ part of the pool $S_t$, $\lambda\in(0,1]$ the
fraction of the $R$ rounds that forms the recent window, and
$W_{t,c}\subseteq S_{t,c}$ the class-$c$ surrogates uploaded in the last
$\lceil\lambda R\rceil$ rounds of the task. 
Under the task-end model
$\theta_t^{(R)}$ we normalize each surrogate feature,
$\vz(s)=\phi_{\theta_t^{(R)}}(s)/\|\phi_{\theta_t^{(R)}}(s)\|$, and take the mean
over the recent window, $\vm_{t,c}=|W_{t,c}|^{-1}\sum_{s\in W_{t,c}}\vz(s)$,
as the target. 
Herding then picks $B$ surrogates $p_1,\ldots,p_B$ from
$S_{t,c}$, where $p_j$ is chosen among the remaining candidates so that the mean
of $p_1,\ldots,p_j$ is closest to the target,
\begin{equation}
p_j
\in\argmin_{s\in S_{t,c}\setminus\{p_1,\ldots,p_{j-1}\}}
\left\|
\vm_{t,c}
-\tfrac1j
\left(
\vz(s)+\sum\nolimits_{i<j}\vz(p_i)
\right)
\right\|^2 ,
\label{eq:herding}
\end{equation}
and selected $B$ surrogates form $\B_{t,c}$. 
Collecting $\B_{t,c}$ over the classes of the task gives the final buffer
of the task, $\B_t=\bigcup_{c\in\Cset_t}\B_{t,c}$.
We provide a theoretical analysis of when this buffer represents the
task more closely than full accumulation in Sec.~\ref{sec:theory_short}, and ablate $\lambda$ and $B$ in
Appendix~\ref{sec:policy_ablation}.

\subsection{Resolving Class Imbalance in Server-Side Training}
\label{sec:imbalance}

Each previous class is capped at $B$ herded surrogates, whereas the current task 
receives fresh surrogates every round, so the server's training
pool $\B_{1:t-1}\cup S_t$ becomes skewed toward new classes with every round.
To resolve this imbalance, we sample classes with probability
$q_c=n_c^\alpha/\sum_{c'}n_{c'}^\alpha$, where $n_c$ is the class count in this
pool and $\alpha\in[0,1]$ spans count-proportional sampling
at $\alpha=1$ and class-balanced sampling at $\alpha=0$. In addition, we
train with logit adjustment~\citep{menon2021longtail} under the surrogate-pool
prior $\pi_c=n_c/\sum_{c'}n_{c'}$, minimizing
$\mathcal L_{\mathrm{server}}=\mathrm{CE}\!\left(f_\theta(x)+\tau\log\boldsymbol\pi,\,y\right)$
with $c\sim q$. We ablate both components in
Appendix~\ref{app:balance_logit_ablation}.

\subsection{Theoretical Analysis of Temporal Herding}
\label{sec:theory_short}

Full accumulation, which keeps the whole pool of a task without any selection,
is a natural choice for replay, since it discards none of the
surrogates. We establish the condition under which the
buffer formed by temporal herding represents the class more closely than full
accumulation, even though it keeps only $B$ surrogates per class.
We provide the full proofs in Appendix~\ref{app:temporal_target}.

\noindent\textbf{Setup.}\quad
We fix a class and write $P$ for the indices of all its surrogates in the task
pool. $P_r\subseteq P$ holds those uploaded in round $r$ and tracks how their error
changes over rounds, $W\subseteq P$ holds those from the recent
$\lceil\lambda R\rceil$ rounds, and $H_\ell\subseteq P$ holds those selected in
the first $\ell$ steps of Eq.~\eqref{eq:herding}, so that $H_B$ indexes the buffer
$\B_{t,c}$. Each surrogate is represented by its normalized task-end feature
$\vz_i$, and $\vm_A$ denotes the mean of $\vz_i$ over an index set $A$.
Let $\vmu(\theta)$ be the real class mean under model $\theta$, and write
$\vmu=\vmu(\theta_t^{(R)})$ for the mean of the real class under the task-end
model, the quantity that $\B_{t,c}$ should represent faithfully.
Our analysis is based on the following assumptions.
\begin{restated}{Assumption}{asm:directional_coverage}
\textup{(Directional coverage)} For each $2\le\ell\le B$, some
$q\in P\setminus H_{\ell-1}$ satisfies
\[
\langle\vm_W-\vm_{H_{\ell-1}},\,\vz_q-\vm_W\rangle\ge0.
\]
\end{restated}
Assumption~\ref{asm:directional_coverage} ensures that at every step some
candidate not yet selected moves the mean of the selected set toward the target,
since it lies at or beyond $\vm_W$ along the direction from the current mean to
the target.
\begin{restated}{Assumption}{asm:coherent_bias}
\textup{(Coherent stale bias)} There exist a unit vector $\vu$, coefficients
$a_r\ge0$ non-increasing in $r$, and vectors $\vxi_r$ with $\|\vxi_r\|\le\sigma$
such that $\vm_{P_r}-\vmu=a_r\vu+\vxi_r$ for every round $r$.
\end{restated}
Assumption~\ref{asm:coherent_bias} decomposes the task-end error of the round-$r$
surrogates into a stale bias along a fixed direction and a bounded deviation. The
bias does not grow over rounds, since later surrogates are matched under models
closer to the task-end model and are thus less stale
(Proposition~\ref{lem:terminal_stale}).

Let $\Delta_W$ be the largest distance from a candidate to the target, which is at
most $2$ since the features are unit vectors. Let $A_P$ and $A_W$ be the weighted
averages of $a_r$ over the whole pool and over the recent window, each round
weighted by its number of surrogates. These quantities represent the stale biases
carried by $\vm_P$ and $\vm_W$, and $A_W\le A_P$ since $W$ holds the latest
surrogates.
\begin{restated}{Lemma}{thm:greedy_target_bound}
Under Assumption~\ref{asm:directional_coverage}, for $B\le|P|$, the buffer
$\B_{t,c}$ satisfies $\|\vm_{H_B}-\vm_W\|\le\Delta_W/\sqrt B$.
\end{restated}
Each surrogate picked by
Eq.~\eqref{eq:herding} keeps the mean of the selected set close to the target, so
the finite buffer $\B_{t,c}$ preserves the recent target up to an approximation
error of at most $2/\sqrt B$.
\begin{restated}{Theorem}{thm:greedy_full_improvement}
Under Assumptions~\ref{asm:directional_coverage} and~\ref{asm:coherent_bias},
the buffer $\B_{t,c}$ is strictly closer to the real class mean than full
accumulation under the condition
\[
\underbrace{A_P-A_W}_{\text{removed stale bias}}
>\underbrace{2\sigma}_{\text{deviation}}
+\underbrace{\Delta_W/\sqrt B}_{\text{approximation error}}
\;\Longrightarrow\;
\|\vm_{H_B}-\vmu\|<\|\vm_P-\vmu\|.
\]
\end{restated}
Theorem~\ref{thm:greedy_full_improvement} shows that, under its condition, the
buffer of temporal herding represents the real class mean more closely than full
accumulation with only $B$ surrogates per class. The
condition requires the removed stale bias to exceed the deviation and the
approximation error, and it is increasingly met over rounds, since the earlier surrogates grow
more stale while $2\sigma$ and $\Delta_W/\sqrt B$ stay fixed.

\FloatBarrier

\section{Experiments}
\begin{table*}[!t]
\centering
\fontsize{8}{9}\selectfont
\renewcommand{\arraystretch}{1.0}
\setlength{\tabcolsep}{0.5pt}
\resizebox{\textwidth}{!}{%
\begin{tabular}{@{}llc ccccccc>{\columncolor{oursbg}}c@{}}
\toprule
Dataset & $\beta$ & Metric & FedAvg & FedEWC & FedLwF & TARGET & GLFC & FedCBDR & Re-Fed & ReSCENE\\
\midrule
\multirow{6}{*}{CIFAR-10}
& \multirow{2}{*}{0.1} & AA  & 10.00\std{0.00}& 10.00\std{0.00}& 10.41\std{0.61}& 9.95\std{0.34}& 14.91\std{5.36}& 25.94\std{3.63}& 30.67\std{3.72}& \textbf{61.78}\std{1.14}\\
& & AIA & 23.34\std{0.88}& 23.37\std{0.92}& 22.93\std{0.09}& 22.80\std{0.04}& 24.65\std{2.41}& 37.23\std{3.36}& 36.74\std{5.41}& \textbf{68.35}\std{0.62}\\
\cmidrule(lr){2-11}
& \multirow{2}{*}{0.5} & AA  & 15.84\std{3.04}& 17.54\std{0.94}& 16.94\std{5.70}& 21.66\std{7.12}& 35.38\std{1.60}& 46.78\std{0.93}& 50.73\std{3.02}& \textbf{67.96}\std{1.33}\\
& & AIA & 32.90\std{7.78}& 36.81\std{5.03}& 33.57\std{11.77}& 36.76\std{13.40}& 48.90\std{6.83}& 57.64\std{4.59}& 62.69\std{6.83}& \textbf{73.53}\std{0.73}\\
\cmidrule(lr){2-11}
& \multirow{2}{*}{1.0} & AA  & 14.70\std{4.25}& 15.12\std{4.53}& 22.82\std{7.36}& 18.86\std{1.39}& 43.13\std{0.96}& 50.06\std{0.91}& 56.57\std{1.26}& \textbf{69.64}\std{0.84}\\
& & AIA & 30.83\std{1.89}& 31.78\std{4.52}& 33.51\std{5.96}& 32.37\std{3.58}& 52.56\std{5.15}& 57.07\std{2.63}& 66.83\std{2.33}& \textbf{74.57}\std{0.67}\\
\midrule
\multirow{6}{*}{CIFAR-100}
& \multirow{2}{*}{0.1} & AA  & 5.80\std{0.91}& 5.84\std{0.45}& 9.01\std{2.41}& 8.48\std{4.04}& 17.84\std{0.69}& 34.21\std{1.42}& 29.58\std{0.85}& \textbf{44.89}\std{1.13}\\
& & AIA & 9.93\std{0.83}& 10.05\std{0.84}& 11.84\std{3.56}& 10.53\std{2.48}& 26.18\std{1.24}& 37.97\std{0.75}& 34.98\std{0.77}& \textbf{49.67}\std{0.69}\\
\cmidrule(lr){2-11}
& \multirow{2}{*}{0.5} & AA  & 8.14\std{0.11}& 8.14\std{0.05}& 20.65\std{0.81}& 20.18\std{3.87}& 20.23\std{0.15}& 41.64\std{1.31}& 33.92\std{0.47}& \textbf{49.80}\std{0.22}\\
& & AIA & 18.46\std{0.22}& 18.83\std{0.79}& 28.62\std{1.24}& 30.95\std{2.11}& 31.67\std{0.52}& 49.83\std{0.45}& 49.89\std{0.10}& \textbf{57.71}\std{0.34}\\
\cmidrule(lr){2-11}
& \multirow{2}{*}{1.0} & AA  & 8.24\std{0.05}& 8.27\std{0.21}& 23.60\std{0.45}& 24.90\std{1.34}& 20.86\std{0.14}& 41.83\std{0.37}& 34.65\std{0.47}& \textbf{50.70}\std{0.17}\\
& & AIA & 20.44\std{0.92}& 20.78\std{0.81}& 34.54\std{2.78}& 35.13\std{2.86}& 32.85\std{0.29}& 52.03\std{1.55}& 51.94\std{0.56}& \textbf{58.85}\std{0.19}\\
\midrule
\multirow{6}{*}{TinyImageNet}
& \multirow{2}{*}{0.1} & AA  & 4.81\std{0.19}& 4.80\std{0.18}& 10.97\std{1.20}& 13.64\std{0.13}& 8.32\std{0.70}& 21.10\std{1.23}& 14.98\std{1.20}& \textbf{23.84}\std{0.18}\\
& & AIA & 11.38\std{0.15}& 11.26\std{0.63}& 16.66\std{1.01}& 18.86\std{0.87}& 16.45\std{0.57}& 27.27\std{1.49}& 26.75\std{0.90}& \textbf{31.80}\std{0.68}\\
\cmidrule(lr){2-11}
& \multirow{2}{*}{0.5} & AA  & 6.04\std{0.16}& 6.08\std{0.01}& 19.20\std{0.25}& 21.50\std{0.04}& 10.64\std{0.46}& 25.80\std{0.82}& 21.78\std{0.23}& \textbf{26.09}\std{1.01}\\
& & AIA & 17.21\std{0.29}& 17.23\std{0.28}& 29.06\std{1.84}& 30.78\std{0.50}& 21.10\std{0.29}& 34.45\std{1.87}& \textbf{37.96}\std{0.64}& 36.07\std{0.86}\\
\cmidrule(lr){2-11}
& \multirow{2}{*}{1.0} & AA  & 6.21\std{0.02}& 6.23\std{0.06}& 19.23\std{0.59}& 20.66\std{3.22}& 10.82\std{0.68}& 26.10\std{1.17}& 22.91\std{0.40}& \textbf{28.69}\std{0.43}\\
& & AIA & 17.39\std{0.29}& 17.41\std{0.17}& 29.26\std{1.15}& 30.56\std{4.35}& 22.22\std{0.28}& 35.35\std{3.59}& \textbf{39.40}\std{0.31}& 37.86\std{0.13}\\
\bottomrule
\end{tabular}
}
\caption{AA and AIA (\%) of the baselines and ReSCENE. Small gray values are
standard deviations over three seeds.}
\label{tab:main}
\end{table*}

\subsection{Experimental Setup}
\label{sec:setup}

\noindent\textbf{Datasets and Protocol.}\quad
We evaluate on three federated class-incremental benchmarks. CIFAR-10 is divided
into $T{=}5$ two-class tasks, and CIFAR-100 and TinyImageNet into $T{=}10$ tasks
of ten and twenty classes, respectively. 
We use $N{=}20$ clients, $K{=}10$ participants per round, $R{=}20$
communication rounds per task, and task-wise Dirichlet label skew
$\beta\in\{0.1,0.5,1.0\}$. Every configuration is run with three seeds, and we
report the mean and standard deviation.

\noindent\textbf{Baselines and Implementation.}\quad
All methods use ResNet-18 with a classifier that expands as new tasks arrive.
We compare with FedAvg~\citep{pmlr-v54-mcmahan17a}, with
FedEWC~\citep{doi:10.1073/pnas.1611835114} and
FedLwF~\citep{10.1109/TPAMI.2017.2773081} as representatives of the
regularization and distillation families, and with
TARGET~\citep{Zhang_2023_ICCV}, GLFC~\citep{Dong_2022_CVPR},
FedCBDR~\citep{NEURIPS2025_d611d06e}, and Re-Fed~\citep{Li_2024_CVPR} as
replay-based methods of the same family as ReSCENE. We further compare with the
SA-FL methods FedDM~\citep{Xiong_2023_CVPR} and FedAF~\citep{Wang_2024_CVPR}
under the same condensation settings as ReSCENE. Every baseline
trains its local model for $20$ epochs per round. ReSCENE uses IPC of $10$, a
per-class buffer budget $B{=}1000$, a temporal fraction $\lambda{=}0.75$, and
$E_{\mathrm{srv}}{=}100$ server epochs per round.
Appendix~\ref{app:hp} describes the datasets, settings, and per-method
hyperparameters in full detail. 

\noindent\textbf{Metrics.}\quad
We report Average Accuracy (AA), the accuracy over all seen classes after the
final task, and Average Incremental Accuracy (AIA), which measures that same
accuracy after every task and averages it over the $T$ tasks. AA captures what the
model retains at the end of the sequence, and AIA captures how well it performed throughout it.

\begin{figure}[t!]
\centering
\includegraphics[width=0.94\textwidth]{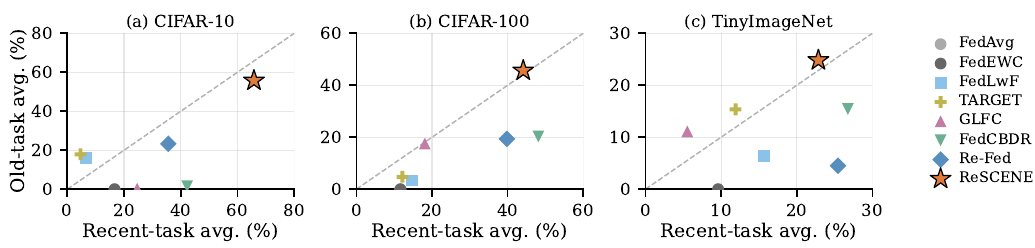}
\caption{Old-task versus recent-task accuracy at $\beta{=}0.1$. Points high on
both axes and near the diagonal forget less, performing well on both old and
recent tasks.}
\label{fig:old_recent}
\end{figure}

\subsection{Main Results}

Table~\ref{tab:main} reports AA and AIA on the three benchmarks. ReSCENE
attains the highest AA in all nine settings and the highest AIA in seven of
them, and its margin over the strongest baseline is largest at $\beta{=}0.1$ on
every benchmark. On CIFAR-10 at $\beta{=}0.1$ it leads Re-Fed by $31.1$ AA points, and on
CIFAR-100 it leads FedCBDR by $8$ to $11$ AA points at every $\beta$. On
TinyImageNet, where every method reaches lower accuracy, ReSCENE still performs
best in all settings except the AIA at $\beta{=}0.5$ and $1.0$. We attribute
this advantage to the server training on the earlier tasks together with the
current one at every round, which lets ReSCENE learn the boundaries between
classes of different tasks directly. The advantage over the
replay baselines also holds when every replay method receives the same replay
budget as ReSCENE, where ReSCENE stays $17.2$ to $48.3$ AA points ahead
(Appendix~\ref{app:cost_memory}). We also
evaluate ReSCENE in the domain-incremental setting on Office-31, where it best
retains the earlier domains and attains the highest AA and AIA (Appendix~\ref{sec:domain_plan}).

\begin{wraptable}{r}{0.44\textwidth}
\vspace{-10pt}
\centering
\footnotesize
\setlength{\tabcolsep}{4pt}
\renewcommand{\arraystretch}{1.02}
\resizebox{\linewidth}{!}{%
\begin{tabular}{@{}ll cc>{\columncolor{oursbg}[1pt][1pt]}c@{\hspace{1pt}}}
\toprule
Dataset & & FedDM & FedAF & ReSCENE\\
\midrule
\multirow{2}{*}{CIFAR-10}  & AA  & 15.69 & 15.99 & \textbf{61.78}\\
                           & AIA & 37.06 & 37.06 & \textbf{68.35}\\
\midrule
\multirow{2}{*}{CIFAR-100} & AA  & 6.27 & 6.37 & \textbf{44.89}\\
                           & AIA & 16.31 & 16.35 & \textbf{49.67}\\
\midrule
\multirow{2}{*}{TinyImageNet} & AA & 2.52 & 2.58 & \textbf{23.84}\\
                           & AIA & 10.03 & 10.37 & \textbf{31.80}\\
\bottomrule
\end{tabular}}
\caption{AA and AIA (\%) of the SA-FL baselines and ReSCENE at $\beta{=}0.1$.}
\label{tab:safl}
\vspace{-4pt}
\end{wraptable}

\noindent\textbf{Comparison with SA-FL baselines.}\quad
We compare ReSCENE with the existing SA-FL methods in the FCL setting. Both methods follow the same condense-and-upload structure as
ReSCENE, but the server trains only on the surrogates received in the current
round and discards them afterwards. Table~\ref{tab:safl} shows that both
methods fail to retain any earlier task. On all three benchmarks, their final
accuracy on every earlier task drops to $0\%$ and only the last task keeps its
accuracy, so their AA stays at $16\%$ or below
(Appendix~\ref{app:forgetting}). Without earlier-task surrogates
at the server, the global model has no data of the earlier classes to train on,
and each new task overwrites them entirely. ReSCENE keeps them and trains on them together with the current task, and remains effective on every
benchmark, which shows that surrogate aggregation mitigates forgetting only once the surrogates are kept and replayed.

\subsection{Forgetting Analysis}
\label{sec:analysis}

Catastrophic forgetting is a central challenge in FCL, as knowledge acquired on
earlier tasks is erased when learning new tasks, so mitigating it requires
preserving task-level accuracy throughout the sequence. We split the task
sequence into an old and a recent group, measure the accuracy of each group
after the entire sequence, and plot them against each other in
Figure~\ref{fig:old_recent}. For CIFAR-10 the old group is T0 and T1 and the
recent group T2 to T4, and for the ten-task datasets each group holds five
tasks. Appendix~\ref{app:forgetting} reports the per-task accuracy behind every
entry of Table~\ref{tab:main} and plots the full task-wise profiles.

Most baselines lie below the diagonal, so their final accuracy relies on the
recent tasks while the old tasks are largely forgotten. ReSCENE attains the
highest old-task accuracy on every benchmark, and lies close to the diagonal, showing
comparable accuracy on old and recent tasks. This balance results from the
server keeping the surrogates of every earlier task and training on them
together with the current one, which allows ReSCENE to retain earlier-task
knowledge throughout the sequence and thus structurally mitigate forgetting.

\section{Ablations and Analyses}
\label{sec:analyses}

\subsection{Scalability to Larger Client Populations and Models}
\label{sec:scaling_clients}

\newsavebox{\scalB}
\newcommand{\dup}[1]{{\scriptsize\textcolor[HTML]{2E7D32}{$\uparrow$#1}}}
\newcommand{\ddn}[1]{{\scriptsize\textcolor[HTML]{B5473A}{$\downarrow$#1}}}

Practical federated deployments typically involve many clients and benefit from larger models. 
We therefore test whether
the advantage of ReSCENE holds as the client population and the model grow.

\noindent\textbf{Larger client populations.}\quad
Cross-device federated learning involves a large population of clients, each
holding a small share of the data~\citep{10.1561/2200000083}. To move closer
to this regime, we scale the population to $N{=}100$ clients with $K{=}70$
participants per round on CIFAR-10 at $\beta{=}0.5$. Every baseline loses AA as the population
grows, whereas ReSCENE gains $4.17$ AA and $4.93$ AIA points over the $N{=}20$
setting and reaches $72.13$ AA (Table~\ref{tab:scaling_clients}a). With more
clients, each client holds less and more skewed data, so each local update fits
a narrower distribution, and averaging many such updates degrades the global
model. In ReSCENE, each client condenses its
own data, and the server trains on the surrogates of all $70$ participants together, so a larger population supplies surrogates from more partitions,
while temporal herding keeps the buffer at $B{=}1000$ per class. ReSCENE
keeps the lowest client cost, $0.47\times$ FedAvg's computation and a $0.25$~MB
uplink against $44.73$~MB.

\begin{table}[t]
\centering
\footnotesize
\sbox{\scalB}{\setlength{\tabcolsep}{3pt}\renewcommand{\arraystretch}{1.0}%
\resizebox{0.535\textwidth}{!}{\begin{tabular}{llcccc}
\toprule
Backbone & Method & Client comp. & Uplink & \multicolumn{2}{c}{Accuracy}\\
& & (PFLOP/run) & (MB/rd) & AA & AIA\\
\midrule
ResNet-18 & FedCBDR & 933.15 & 44.92 & 41.64 & 49.83\\
& Re-Fed & 146.46 & 44.92 & 33.92 & 49.89\\
& \ours ReSCENE & \ours 27.49 & \ours 1.23 & \ours \textbf{49.80} & \ours \textbf{57.71}\\
\midrule
ResNet-50 & FedCBDR & 2180.66 & 95.03 & 36.44 & 42.05\\
& Re-Fed & 342.34 & 95.03 & 29.06 & 38.54\\
& \ours ReSCENE & \ours 64.26 & \ours 1.23 & \ours \textbf{55.25} & \ours \textbf{61.70}\\
\midrule
ViT-Small/4 & FedCBDR & 2320.76 & 85.51 & 20.25 & 29.20\\
& Re-Fed & 364.34 & 85.51 & 22.66 & 33.27\\
& \ours ReSCENE & \ours 68.40 & \ours 1.23 & \ours \textbf{31.45} & \ours \textbf{38.59}\\
\bottomrule
\end{tabular}}}
\begin{minipage}[b]{0.445\textwidth}
\centering
\setlength{\tabcolsep}{2pt}
\renewcommand{\arraystretch}{1.28}
\resizebox{!}{\ht\scalB}{\begin{tabular}{@{}l cc cc@{}}
\toprule
& Client comp. & Uplink & \multicolumn{2}{c}{Accuracy}\\
Method & (PFLOP/run) & (MB/rd) & AA & AIA\\
\midrule
FedAvg  & 46.33 & 44.73 & 14.58\,\ddn{1.26} & 28.49\,\ddn{4.41}\\
FedEWC  & 48.65 & 44.73 & 15.97\,\ddn{1.57} & 31.62\,\ddn{5.19}\\
FedLwF  & 58.69 & 44.73 & 14.38\,\ddn{2.56} & 29.30\,\ddn{4.27}\\
TARGET  & 132.25 & 44.73 & 16.01\,\ddn{5.65} & 31.29\,\ddn{5.47}\\
GLFC    & 108.45 & 44.73 & 34.52\,\ddn{0.86} & 49.21\,\dup{0.31}\\
FedCBDR & 326.31 & 44.73 & 40.07\,\ddn{6.71} & 52.12\,\ddn{5.52}\\
Re-Fed  & 139.87 & 44.73 & 47.58\,\ddn{3.15} & 56.55\,\ddn{6.14}\\
\rowcolor{oursbg}ReSCENE & 21.93 & 0.25 & \textbf{72.13}\,\dup{4.17} & \textbf{78.46}\,\dup{4.93}\\
\bottomrule
\end{tabular}}

\smallskip
(a) More clients (CIFAR-10, $\beta{=}0.5$)
\end{minipage}\hfill
\begin{minipage}[b]{0.535\textwidth}
\centering
\usebox{\scalB}

\smallskip
(b) Larger models (CIFAR-100, $\beta{=}0.5$)
\end{minipage}
\caption{Accuracy and client cost with (a) more clients and (b)
larger models. In (a), the arrows and the numbers beside them give the change
from $N{=}20$ in Table~\ref{tab:main}.}
\label{tab:scaling_clients}
\end{table}

\noindent\textbf{Larger models.}\quad
Traditional weight-communicating methods 
train and upload the full model on every client, so
their client cost grows with the model size. ReSCENE, in contrast, trains the
model at the server, and its clients upload a fixed number of surrogates per
class, so its uplink is independent of the model size. 
We compare ReSCENE
with FedCBDR and Re-Fed, the strongest baselines in Table~\ref{tab:main}, on
ResNet-50 and ViT-Small/4 for CIFAR-100 at $\beta{=}0.5$, with ResNet-18 as the
reference. 

ReSCENE attains the highest accuracy at every model size, leading the
strongest baseline by $8.16$, $18.81$, and $8.79$ AA points
(Table~\ref{tab:scaling_clients}b). 
From ResNet-18 to ResNet-50, FedCBDR and
Re-Fed lose $5.20$ and $4.86$ AA points, whereas ReSCENE gains $5.45$. 
The model
grows while the local data of each client stays small and skewed, so FedCBDR and
Re-Fed must fit more parameters from the same few samples on every client.
Meanwhile, ReSCENE trains the global model on the collected surrogate pool, which
gathers the surrogates of all clients, so the larger model is trained on more data
and benefits from its added capacity. ReSCENE uploads the same $1.23$~MB for every backbone, and its
client computation stays at about $2.9\%$ of FedCBDR's and $18.8\%$ of Re-Fed's,
since its clients only synthesize surrogates through a frozen model. All methods
drop on ViT-Small/4 due to its lack of a convolutional inductive bias, 
yet ReSCENE still leads.
Appendix~\ref{app:cost} accounts for the full client and server cost, and
Sec.~\ref{sec:server_budget} shows that ReSCENE stays ahead with far less server
training.

\newsavebox{\herdB}

\subsection{Herding-Buffer Ablation}
\label{sec:herding_main}

Following the analysis in Sec.~\ref{sec:theory_short}, we compare temporal
herding with other ways of constructing the buffer under the same budget, and
with full accumulation, which keeps every surrogate. We also ablate the
hyperparameters of buffer construction in Appendix~\ref{sec:policy_ablation}.

\noindent\textbf{Buffer construction under a fixed budget.}\quad
We compare temporal herding with four other ways of building the buffer under the
same budget of $B{=}1000$ on CIFAR-10 at $\beta{=}0.5$, which keep the earliest,
the latest, or randomly selected surrogates or herd toward the mean of the full
pool. In Table~\ref{tab:herding_main}a, temporal herding attains the highest AA
and AIA, which shows that it forms the best buffer within the same budget.
Latest-1K also relies on the recent surrogates being less stale, but it keeps only
the newest ones and discards the rest. 
Temporal herding instead uses the recent
surrogates to define the target and selects from the whole pool, adding an
earlier surrogate whenever it moves the buffer mean toward that target. 
It thus improves on Latest-1K by $1.40$ AA points.
Full-pool herding selects from the same pool toward the full-pool mean, and its
lower accuracy confirms the benefit of the recent target that
Proposition~\ref{lem:terminal_stale} motivates.

\begin{table}[t]
\centering
\footnotesize
\sbox{\herdB}{\setlength{\tabcolsep}{4pt}\renewcommand{\arraystretch}{1.0}%
\resizebox{0.60\textwidth}{!}{\begin{tabular}{@{}llcccc@{}}
\toprule
& & & & \multicolumn{2}{c}{Accuracy}\\
Dataset & Method & \# Samples & Server compute & AA & AIA\\
\midrule
CIFAR-10 & Full Accml. & 16,763 ($\times 1.00$) & $\times 1.00$ & \textbf{69.19} & \textbf{74.67}\\
& \ours ReSCENE & \ours 10,000 ($\times 0.60$) & \ours $\times 0.69$ & \ours 67.96 & \ours 73.53\\
\midrule
CIFAR-100 & Full Accml. & 123,738 ($\times 1.00$) & $\times 1.00$ & 47.54 & 56.81\\
& \ours ReSCENE & \ours 98,087 ($\times 0.79$) & \ours $\times 0.78$ & \ours \textbf{49.80} & \ours \textbf{57.71}\\
\midrule
TinyImageNet & Full Accml. & 250,528 ($\times 1.00$) & $\times 1.00$ & 20.02 & 33.18\\
& \ours ReSCENE & \ours 194,355 ($\times 0.78$) & \ours $\times 0.78$ & \ours \textbf{26.09} & \ours \textbf{36.07}\\
\bottomrule
\end{tabular}}}
\begin{minipage}[b]{0.37\textwidth}
\centering
\setlength{\tabcolsep}{9pt}
\renewcommand{\arraystretch}{1.1}
\resizebox{!}{\ht\herdB}{\begin{tabular}{@{}lcc@{}}
\toprule
& \multicolumn{2}{c}{Accuracy}\\
Method & AA & AIA\\
\midrule
Earliest-1K & 64.80 & 69.82\\
Latest-1K & 66.56 & 72.67\\
Random-1K & 65.28 & 70.81\\
Full-Pool Herd. & 66.70 & 72.86\\
\rowcolor{oursbg}\textbf{Temporal Herd.} & \textbf{67.96} & \textbf{73.53}\\
\bottomrule
\end{tabular}}

\smallskip
(a) Buffer construction
\end{minipage}\hfill
\begin{minipage}[b]{0.60\textwidth}
\centering
\usebox{\herdB}

\smallskip
(b) Comparison with full accumulation
\end{minipage}
\caption{(a) Herding-buffer ablation with a fixed budget of $B{=}1000$ per class.
(b) Comparison with full accumulation.}
\label{tab:herding_main}
\end{table}

\noindent\textbf{Comparison with full accumulation.}\quad
Full accumulation, which keeps every uploaded surrogate and trains on the largest
pool available, is a natural choice when server memory allows. We compare it
with ReSCENE at $\beta{=}0.5$ on the three benchmarks. Although full
accumulation keeps a small edge of $1.23$ AA points on CIFAR-10, ReSCENE is more
accurate on CIFAR-100 and TinyImageNet by $2.26$ and $6.07$ AA points while holding only $0.60$ to $0.79\times$ the surrogates and requiring $0.69$ to $0.78\times$
the server computation (Table~\ref{tab:herding_main}b). 
While full accumulation keeps
the stale early surrogates,
temporal herding forms a compact buffer centered on the recent
target. This is supported by Theorem~\ref{thm:greedy_full_improvement}, according to
which the lower stale bias can bring this buffer closer to the real class mean than
the whole pool.
These results therefore show that temporal herding can build a
better buffer even with less server memory and computation.

\needspace{10\baselineskip}
\subsection{Server Training Budget Ablation}
\label{sec:server_budget}

\begin{wraptable}{r}{0.40\textwidth}
\vspace{-5pt}
\centering
\footnotesize
\setlength{\tabcolsep}{4pt}
\renewcommand{\arraystretch}{0.95}
\begin{tabular}{@{}cccc@{}}
\toprule
& & \multicolumn{2}{c}{Accuracy}\\
$E_{\mathrm{srv}}$ & Server comp. & AA & AIA\\
\midrule
100 & 1.00$\times$ & 61.78 & 68.35\\
50  & 0.50$\times$ & 58.26 & 66.91\\
25  & 0.25$\times$ & 53.91 & 64.81\\
10  & 0.10$\times$ & 53.45 & 65.03\\
\midrule
\multicolumn{2}{@{}l}{Re-Fed} & 30.67 & 36.74\\
\bottomrule
\end{tabular}
\caption{AA and AIA (\%) in the server-epoch sweep.}
\label{tab:esrv_sweep}
\vspace{0pt}
\end{wraptable}

In this work, we assume a resource-rich server. Since server computation is still
not free, we ablate the number of server epochs $E_{\mathrm{srv}}$ per round, which
is the main factor of the server computation of ReSCENE. We reduce
$E_{\mathrm{srv}}$ from $100$ to $50$, $25$, and $10$ on
CIFAR-10 at $\beta{=}0.1$. Cutting $E_{\mathrm{srv}}$ to $10$ removes $90\%$ of the
server computation, whereas AA drops by $8.33$ points and AIA by only $3.32$
points (Table~\ref{tab:esrv_sweep}). At $E_{\mathrm{srv}}{=}10$, ReSCENE still
leads Re-Fed, the strongest baseline in this setting, by $22.78$ AA points. ReSCENE
therefore keeps its lead over every baseline even with far fewer server epochs.

\vspace{-4pt}
\section{Conclusion}
\label{sec:conclusion}

In this paper, we present ReSCENE, which shifts forgetting mitigation from the clients to the
server in federated continual learning. In ReSCENE, clients upload condensed
surrogates of their local data, and the server keeps those of earlier tasks and
trains the global model on them together with the current task. We further
introduce temporal herding, which compresses the pool of each task into a bounded
buffer centered on the recent rounds, and our analysis establishes the condition
under which this buffer represents the task more closely than full accumulation.
Across three class-incremental benchmarks, ReSCENE attains the strongest accuracy
at a fraction of the client computation and uplink of the baselines, and it keeps
this advantage with more clients and larger models. We believe that ReSCENE offers
a practical and efficient solution for federated continual learning and motivates
further research on server-side forgetting mitigation.

\noindent\textbf{Limitations and future work.}\quad
In ReSCENE, the quality of the surrogates depends on the dataset condensation
method. Developing condensation methods that better represent the local data
therefore remains an important direction for future work.

\clearpage
{\small
\bibliographystyle{plainnat}
\bibliography{refs}

@article{10.1561/2200000083,
author = {Kairouz, Peter and McMahan, H. Brendan and Avent, Brendan and Bellet, Aur{\'e}lien and Bennis, Mehdi and Nitin Bhagoji, Arjun and Bonawitz, Kallista and Charles, Zachary and Cormode, Graham and Cummings, Rachel and D'Oliveira, Rafael G. L. and Eichner, Hubert and El Rouayheb, Salim and Evans, David and Gardner, Josh and Garrett, Zachary and Gasc{\'o}n, Adri{\`a} and Ghazi, Badih and Gibbons, Phillip B. and Gruteser, Marco and Harchaoui, Zaid and He, Chaoyang and He, Lie and Huo, Zhouyuan and Hutchinson, Ben and Hsu, Justin and Jaggi, Martin and Javidi, Tara and Joshi, Gauri and Khodak, Mikhail and Konecn{\'y}, Jakub and Korolova, Aleksandra and Koushanfar, Farinaz and Koyejo, Sanmi and Lepoint, Tancr{\`e}de and Liu, Yang and Mittal, Prateek and Mohri, Mehryar and Nock, Richard and {\"O}zg{\"u}r, Ayfer and Pagh, Rasmus and Qi, Hang and Ramage, Daniel and Raskar, Ramesh and Raykova, Mariana and Song, Dawn and Song, Weikang and Stich, Sebastian U. and Sun, Ziteng and Suresh, Ananda Theertha and Tram{\`e}r, Florian and Vepakomma, Praneeth and Wang, Jianyu and Xiong, Li and Xu, Zheng and Yang, Qiang and Yu, Felix X. and Yu, Han and Zhao, Sen},
title = {Advances and Open Problems in Federated Learning},
year = {2021},
issue_date = {Jun 2021},
publisher = {Now Publishers Inc.},
address = {Hanover, MA, USA},
volume = {14},
number = {1--2},
issn = {1935-8237},
url = {https://doi.org/10.1561/2200000083},
doi = {10.1561/2200000083},
journal = {Found. Trends Mach. Learn.},
month = jun,
pages = {1--210},
numpages = {214}
}

@InProceedings{pmlr-v54-mcmahan17a,
  title = 	 {{Communication-Efficient Learning of Deep Networks from Decentralized Data}},
  author = 	 {McMahan, Brendan and Moore, Eider and Ramage, Daniel and Hampson, Seth and Arcas, Blaise Aguera y},
  booktitle = 	 {Proceedings of the 20th International Conference on Artificial Intelligence and Statistics},
  pages = 	 {1273--1282},
  year = 	 {2017},
  editor = 	 {Singh, Aarti and Zhu, Jerry},
  volume = 	 {54},
  series = 	 {Proceedings of Machine Learning Research},
  month = 	 {20--22 Apr},
  publisher =    {PMLR},
  url = 	 {https://proceedings.mlr.press/v54/mcmahan17a.html}
}

@InProceedings{Xiong_2023_CVPR,
    author    = {Xiong, Yuanhao and Wang, Ruochen and Cheng, Minhao and Yu, Felix and Hsieh, Cho-Jui},
    title     = {FedDM: Iterative Distribution Matching for Communication-Efficient Federated Learning},
    booktitle = {Proceedings of the IEEE/CVF Conference on Computer Vision and Pattern Recognition (CVPR)},
    month     = {June},
    year      = {2023},
    pages     = {16323-16332}
}

@InProceedings{Wang_2024_CVPR,
    author    = {Wang, Yuan and Fu, Huazhu and Kanagavelu, Renuga and Wei, Qingsong and Liu, Yong and Goh, Rick Siow Mong},
    title     = {An Aggregation-Free Federated Learning for Tackling Data Heterogeneity},
    booktitle = {Proceedings of the IEEE/CVF Conference on Computer Vision and Pattern Recognition (CVPR)},
    month     = {June},
    year      = {2024},
    pages     = {26233-26242}
}

@InProceedings{Rebuffi_2017_CVPR,
author = {Rebuffi, Sylvestre-Alvise and Kolesnikov, Alexander and Sperl, Georg and Lampert, Christoph H.},
title = {iCaRL: Incremental Classifier and Representation Learning},
booktitle = {Proceedings of the IEEE Conference on Computer Vision and Pattern Recognition (CVPR)},
month = {July},
year = {2017}
}

@article{
doi:10.1073/pnas.1611835114,
author = {James Kirkpatrick  and Razvan Pascanu  and Neil Rabinowitz  and Joel Veness  and Guillaume Desjardins  and Andrei A. Rusu  and Kieran Milan  and John Quan  and Tiago Ramalho  and Agnieszka Grabska-Barwinska  and Demis Hassabis  and Claudia Clopath  and Dharshan Kumaran  and Raia Hadsell },
title = {Overcoming catastrophic forgetting in neural networks},
journal = {Proceedings of the National Academy of Sciences},
volume = {114},
number = {13},
pages = {3521-3526},
year = {2017},
doi = {10.1073/pnas.1611835114},
URL = {https://www.pnas.org/doi/abs/10.1073/pnas.1611835114},
eprint = {https://www.pnas.org/doi/pdf/10.1073/pnas.1611835114}}

@article{10.1109/TPAMI.2017.2773081,
author = {Li, Zhizhong and Hoiem, Derek},
title = {Learning without Forgetting},
year = {2018},
issue_date = {Dec. 2018},
publisher = {IEEE Computer Society},
address = {USA},
volume = {40},
number = {12},
issn = {0162-8828},
url = {https://doi.org/10.1109/TPAMI.2017.2773081},
doi = {10.1109/TPAMI.2017.2773081},
journal = {IEEE Trans. Pattern Anal. Mach. Intell.},
month = dec,
pages = {2935--2947},
numpages = {13}
}

@InProceedings{Dong_2022_CVPR,
    author    = {Dong, Jiahua and Wang, Lixu and Fang, Zhen and Sun, Gan and Xu, Shichao and Wang, Xiao and Zhu, Qi},
    title     = {Federated Class-Incremental Learning},
    booktitle = {Proceedings of the IEEE/CVF Conference on Computer Vision and Pattern Recognition (CVPR)},
    month     = {June},
    year      = {2022},
    pages     = {10164-10173}
}

@InProceedings{Zhang_2023_ICCV,
    author    = {Zhang, Jie and Chen, Chen and Zhuang, Weiming and Lyu, Lingjuan},
    title     = {TARGET: Federated Class-Continual Learning via Exemplar-Free Distillation},
    booktitle = {Proceedings of the IEEE/CVF International Conference on Computer Vision (ICCV)},
    month     = {October},
    year      = {2023},
    pages     = {4782-4793}
}

@InProceedings{Li_2024_CVPR,
    author    = {Li, Yichen and Li, Qunwei and Wang, Haozhao and Li, Ruixuan and Zhong, Wenliang and Zhang, Guannan},
    title     = {Towards Efficient Replay in Federated Incremental Learning},
    booktitle = {Proceedings of the IEEE/CVF Conference on Computer Vision and Pattern Recognition (CVPR)},
    month     = {June},
    year      = {2024},
    pages     = {12820-12829}
}

@inproceedings{NEURIPS2025_d611d06e,
 author = {Qi, Zhuang and Tang, Ying-Peng and Meng, Lei and Yu, Han and Li, Xiaoxiao and Meng, Xiangxu},
 booktitle = {Advances in Neural Information Processing Systems},
 doi = {10.52202/085713-4863},
 editor = {D. Belgrave and C. Zhang and H. Lin and R. Pascanu and P. Koniusz and M. Ghassemi and N. Chen},
 pages = {145309--145334},
 publisher = {Curran Associates, Inc.},
 title = {Class-wise Balancing Data Replay for Federated Class-Incremental Learning},
 url = {https://proceedings.neurips.cc/paper_files/paper/2025/file/d611d06e3207330555fbc10810e70163-Paper-Conference.pdf},
 volume = {38, Main Conference},
 year = {2025}
}

@inproceedings{
menon2021longtail,
title={Long-tail learning via logit adjustment},
author={Aditya Krishna Menon and Sadeep Jayasumana and Ankit Singh Rawat and Himanshu Jain and Andreas Veit and Sanjiv Kumar},
booktitle={International Conference on Learning Representations},
year={2021},
url={https://openreview.net/forum?id=37nvvqkCo5}
}

@inproceedings{NIPS2017_0efbe980,
 author = {Shin, Hanul and Lee, Jung Kwon and Kim, Jaehong and Kim, Jiwon},
 booktitle = {Advances in Neural Information Processing Systems},
 editor = {I. Guyon and U. Von Luxburg and S. Bengio and H. Wallach and R. Fergus and S. Vishwanathan and R. Garnett},
 pages = {},
 publisher = {Curran Associates, Inc.},
 title = {Continual Learning with Deep Generative Replay},
 url = {https://proceedings.neurips.cc/paper_files/paper/2017/file/0efbe98067c6c73dba1250d2beaa81f9-Paper.pdf},
 volume = {30},
 year = {2017}
}

@InProceedings{pmlr-v139-yoon21b,
  title = 	 {Federated Continual Learning with Weighted Inter-client Transfer},
  author =       {Yoon, Jaehong and Jeong, Wonyong and Lee, Giwoong and Yang, Eunho and Hwang, Sung Ju},
  booktitle = 	 {Proceedings of the 38th International Conference on Machine Learning},
  pages = 	 {12073--12086},
  year = 	 {2021},
  editor = 	 {Meila, Marina and Zhang, Tong},
  volume = 	 {139},
  series = 	 {Proceedings of Machine Learning Research},
  month = 	 {18--24 Jul},
  publisher =    {PMLR},
  url = 	 {https://proceedings.mlr.press/v139/yoon21b.html}
}

@misc{sun2025exemplarcondensedfederatedclassincrementallearning,
      title={Exemplar-condensed Federated Class-incremental Learning}, 
      author={Rui Sun and Yumin Zhang and Varun Ojha and Tejal Shah and Haoran Duan and Bo Wei and Rajiv Ranjan},
      year={2025},
      eprint={2412.18926},
      archivePrefix={arXiv},
      primaryClass={cs.LG},
      url={https://arxiv.org/abs/2412.18926}, 
}

@inproceedings{ijcai2022p303,
  title     = {Continual Federated Learning Based on Knowledge Distillation},
  author    = {Ma, Yuhang and Xie, Zhongle and Wang, Jue and Chen, Ke and Shou, Lidan},
  booktitle = {Proceedings of the Thirty-First International Joint Conference on
               Artificial Intelligence, {IJCAI-22}},
  publisher = {International Joint Conferences on Artificial Intelligence Organization},
  editor    = {Lud De Raedt},
  pages     = {2182--2188},
  year      = {2022},
  month     = {7},
  note      = {Main Track},
  doi       = {10.24963/ijcai.2022/303},
  url       = {https://doi.org/10.24963/ijcai.2022/303},
}

@inproceedings{10.1145/1553374.1553517,
author = {Welling, Max},
title = {Herding dynamical weights to learn},
year = {2009},
isbn = {9781605585161},
publisher = {Association for Computing Machinery},
address = {New York, NY, USA},
url = {https://doi.org/10.1145/1553374.1553517},
doi = {10.1145/1553374.1553517},
booktitle = {Proceedings of the 26th Annual International Conference on Machine Learning},
pages = {1121--1128},
numpages = {8},
location = {Montreal, Quebec, Canada},
series = {ICML '09}
}

@inproceedings{ICLR2024_cf53f12a,
 author = {Bakman, Yavuz Faruk and Yaldiz, Duygu Nur and Ezzeldin, Yahya and Avestimehr, Salman},
 booktitle = {International Conference on Learning Representations},
 editor = {B. Kim and Y. Yue and S. Chaudhuri and K. Fragkiadaki and M. Khan and Y. Sun},
 pages = {47351--47375},
 title = {Federated Orthogonal Training: Mitigating Global Catastrophic Forgetting in Continual Federated Learning},
 url = {https://proceedings.iclr.cc/paper_files/paper/2024/file/cf53f12ab4ea81d942f16162f7f37e6c-Paper-Conference.pdf},
 volume = {2024},
 year = {2024}
}

@inproceedings{Saenko2010AdaptingVC,
  title={Adapting Visual Category Models to New Domains},
  author={Kate Saenko and Brian Kulis and Mario Fritz and Trevor Darrell},
  booktitle={European Conference on Computer Vision},
  year={2010},
  url={https://api.semanticscholar.org/CorpusID:7534823}
}

@inproceedings{10.1145/2976749.2978318,
author = {Abadi, Martin and Chu, Andy and Goodfellow, Ian and McMahan, H. Brendan and Mironov, Ilya and Talwar, Kunal and Zhang, Li},
title = {Deep Learning with Differential Privacy},
year = {2016},
isbn = {9781450341394},
publisher = {Association for Computing Machinery},
address = {New York, NY, USA},
url = {https://doi.org/10.1145/2976749.2978318},
doi = {10.1145/2976749.2978318},
booktitle = {Proceedings of the 2016 ACM SIGSAC Conference on Computer and Communications Security},
pages = {308–318},
numpages = {11},
location = {Vienna, Austria},
series = {CCS '16}
}

@article{10.1561/0400000042,
    author = {Dwork, Cynthia and Roth, Aaron},
    title = {The Algorithmic Foundations of Differential Privacy},
    journal = {Foundations and Trends in Theoretical Computer Science},
    volume = {9},
    number = {3-4},
    pages = {211-487},
    year = {2014},
    month = {08},
    issn = {1551-305X},
    doi = {10.1561/0400000042},
    url = {https://doi.org/10.1561/0400000042},
    eprint = {https://www.emerald.com/fttcs/article-pdf/9/3-4/211/11486280/0400000042en.pdf},
}

@InProceedings{Zhao_2023_WACV,
    author    = {Zhao, Bo and Bilen, Hakan},
    title     = {Dataset Condensation With Distribution Matching},
    booktitle = {Proceedings of the IEEE/CVF Winter Conference on Applications of Computer Vision (WACV)},
    month     = {January},
    year      = {2023},
    pages     = {6514-6523}
}

@ARTICLE{10612802,
  author={Chen, Yuanlu and Ziying Tan, Alysa and Feng, Siwei and Yu, Han and Deng, Tao and Zhao, Libang and Wu, Feng},
  journal={IEEE Internet of Things Journal}, 
  title={General Federated Class-Incremental Learning With Lightweight Generative Replay}, 
  year={2024},
  volume={11},
  number={20},
  pages={33927-33939},
  doi={10.1109/JIOT.2024.3434600}}

@article{McCloskey1989CatastrophicII,
  title={Catastrophic Interference in Connectionist Networks: The Sequential Learning Problem},
  author={Michael McCloskey and Neal J. Cohen},
  journal={Psychology of Learning and Motivation},
  year={1989},
  volume={24},
  pages={109-165},
  url={https://api.semanticscholar.org/CorpusID:61019113}
}

@article{10.1016/j.neucom.2025.130844,
author = {Hamedi, Parisa and Razavi-Far, Roozbeh and Hallaji, Ehsan},
title = {Federated continual learning: Concepts, challenges, and solutions},
year = {2025},
issue_date = {Oct 2025},
publisher = {Elsevier Science Publishers B. V.},
address = {NLD},
volume = {651},
number = {C},
issn = {0925-2312},
url = {https://doi.org/10.1016/j.neucom.2025.130844},
doi = {10.1016/j.neucom.2025.130844},
journal = {Neurocomput.},
month = oct,
numpages = {26}
}

@InProceedings{pmlr-v267-li25cq,
  title = 	 {{F}ed{SSI}: Rehearsal-Free Continual Federated Learning with Synergistic Synaptic Intelligence},
  author =       {Li, Yichen and Wang, Yuying and Wang, Haozhao and Qi, Yining and Xiao, Tianzhe and Li, Ruixuan},
  booktitle = 	 {Proceedings of the 42nd International Conference on Machine Learning},
  pages = 	 {36151--36169},
  year = 	 {2025},
  editor = 	 {Singh, Aarti and Fazel, Maryam and Hsu, Daniel and Lacoste-Julien, Simon and Berkenkamp, Felix and Maharaj, Tegan and Wagstaff, Kiri and Zhu, Jerry},
  volume = 	 {267},
  series = 	 {Proceedings of Machine Learning Research},
  month = 	 {13--19 Jul},
  publisher =    {PMLR},
  url = 	 {https://proceedings.mlr.press/v267/li25cq.html}
}

@inproceedings{ijcai2023p443,
  title     = {FedET: A Communication-Efficient Federated Class-Incremental Learning Framework Based on Enhanced Transformer},
  author    = {Liu, Chenghao and Qu, Xiaoyang and Wang, Jianzong and Xiao, Jing},
  booktitle = {Proceedings of the Thirty-Second International Joint Conference on
               Artificial Intelligence, {IJCAI-23}},
  publisher = {International Joint Conferences on Artificial Intelligence Organization},
  editor    = {Edith Elkind},
  pages     = {3984--3992},
  year      = {2023},
  month     = {8},
  note      = {Main Track},
  doi       = {10.24963/ijcai.2023/443},
  url       = {https://doi.org/10.24963/ijcai.2023/443},
}

@InProceedings{pmlr-v235-piao24a,
  title = 	 {Federated Continual Learning via Prompt-based Dual Knowledge Transfer},
  author =       {Piao, Hongming and Wu, Yichen and Wu, Dapeng and Wei, Ying},
  booktitle = 	 {Proceedings of the 41st International Conference on Machine Learning},
  pages = 	 {40725--40739},
  year = 	 {2024},
  editor = 	 {Salakhutdinov, Ruslan and Kolter, Zico and Heller, Katherine and Weller, Adrian and Oliver, Nuria and Scarlett, Jonathan and Berkenkamp, Felix},
  volume = 	 {235},
  series = 	 {Proceedings of Machine Learning Research},
  month = 	 {21--27 Jul},
  publisher =    {PMLR},
  url = 	 {https://proceedings.mlr.press/v235/piao24a.html}
}

@INPROCEEDINGS{10184531,
  author={Luopan, Yaxin and Han, Rui and Zhang, Qinglong and Liu, Chi Harold and Wang, Guoren and Chen, Lydia Y.},
  booktitle={2023 IEEE 39th International Conference on Data Engineering (ICDE)}, 
  title={FedKNOW: Federated Continual Learning with Signature Task Knowledge Integration at Edge}, 
  year={2023},
  volume={},
  number={},
  pages={341-354},
  doi={10.1109/ICDE55515.2023.00033}}

@inproceedings{10.1007/978-3-031-73650-6_9,
author = {Guo, Haiyang and Zhu, Fei and Liu, Wenzhuo and Zhang, Xu-Yao and Liu, Cheng-Lin},
title = {PILoRA: Prototype Guided Incremental LoRA for Federated Class-Incremental Learning},
year = {2024},
isbn = {978-3-031-73649-0},
publisher = {Springer-Verlag},
address = {Berlin, Heidelberg},
url = {https://doi.org/10.1007/978-3-031-73650-6_9},
doi = {10.1007/978-3-031-73650-6_9},
booktitle = {Computer Vision -- ECCV 2024: 18th European Conference, Milan, Italy, September 29--October 4, 2024, Proceedings, Part LXV},
pages = {141--159},
numpages = {19},
location = {Milan, Italy}
}

@inproceedings{
qi2023better,
title={Better Generative Replay for Continual Federated Learning},
author={Daiqing Qi and Handong Zhao and Sheng Li},
booktitle={The Eleventh International Conference on Learning Representations },
year={2023},
url={https://openreview.net/forum?id=cRxYWKiTan}
}

@inproceedings{10.1145/1559845.1559850,
author = {McSherry, Frank D.},
title = {Privacy integrated queries: an extensible platform for privacy-preserving data analysis},
year = {2009},
isbn = {9781605585512},
publisher = {Association for Computing Machinery},
address = {New York, NY, USA},
url = {https://doi.org/10.1145/1559845.1559850},
doi = {10.1145/1559845.1559850},
booktitle = {Proceedings of the 2009 ACM SIGMOD International Conference on Management of Data},
pages = {19–30},
numpages = {12},
location = {Providence, Rhode Island, USA},
series = {SIGMOD '09}
}

@article{10.1109/TPAMI.2024.3367329,
author = {Wang, Liyuan and Zhang, Xingxing and Su, Hang and Zhu, Jun},
title = {A Comprehensive Survey of Continual Learning: Theory, Method and Application},
year = {2024},
issue_date = {Aug. 2024},
publisher = {IEEE Computer Society},
address = {USA},
volume = {46},
number = {8},
issn = {0162-8828},
url = {https://doi.org/10.1109/TPAMI.2024.3367329},
doi = {10.1109/TPAMI.2024.3367329},
journal = {IEEE Trans. Pattern Anal. Mach. Intell.},
month = aug,
pages = {5362–5383},
numpages = {22}
}
}

\clearpage
\appendix
\begin{center}
{\Large\bf Appendix}
\end{center}
\medskip

\section{Detailed Experimental Setup and Hyperparameters}
\label{app:hp}

We provide the detailed experimental settings, including the datasets and task
splits, the federated protocol, and the hyperparameters of ReSCENE and the
baselines.

\noindent\textbf{Datasets.}\quad
CIFAR-10 is split into five two-class tasks, CIFAR-100 into ten ten-class tasks,
and TinyImageNet into ten 20-class tasks. For these class-incremental datasets,
task-wise label heterogeneity follows a Dirichlet distribution with
$\beta\in\{0.1,0.5,1.0\}$. Office-31 evaluates domain-incremental learning over
the Amazon, DSLR, and Webcam sequence with a shared 31-class label space. As
specified in Sec.~\ref{sec:domain_plan}, it uses $N{=}10$, $K{=}4$, and
$\beta{=}1.0$, following the released code of Re-Fed~\citep{Li_2024_CVPR}. 

\noindent\textbf{Common protocol.}\quad
All methods use the federated protocol of Sec.~\ref{sec:setup}, with $N{=}20$
clients, $K{=}10$ participants per round, and $R{=}20$ communication rounds per
task. FedAvg, FedEWC, FedLwF, TARGET, GLFC,
FedCBDR, and Re-Fed use $20$ local model-training epochs per participating
client and round. ReSCENE instead follows the client-side condensation schedule
specified below. The default model is ResNet-18 with an incrementally
expanded classifier head. CIFAR experiments use the CIFAR variant, whereas
TinyImageNet uses the $64\times64$ variant with max pooling. Client optimization
uses SGD with learning rate $0.01$, momentum $0.9$, weight decay
$5\times10^{-4}$, and batch size $128$. Augmentation consists of random crop and
horizontal flip, with padding $4$ for CIFAR and $8$ for TinyImageNet.

\noindent\textbf{ReSCENE.}\quad
Distribution matching uses IPC $10$, $25$ condensation steps, synthetic-image
learning rate $1.0$, and projected Gaussian parameter perturbation with radius
$\rho{=}5$. The server trains for $100$ epochs per round using SGD with initial
learning rate $0.01$, cosine annealing, momentum $0.9$, and weight decay
$5\times10^{-4}$. The per-class buffer budget is $B{=}1000$, the balanced
sampling exponent is $\alpha{=}0.5$, and the logit-adjustment strength is
$\tau{=}1.0$. Temporal herding uses the temporal fraction
$\lambda{=}0.75$.

\noindent\textbf{FedAvg~\citep{pmlr-v54-mcmahan17a}.}\quad
No additional hyperparameters beyond the common protocol.

\noindent\textbf{FedEWC~\citep{doi:10.1073/pnas.1611835114}.}\quad
EWC penalty weight $\lambda_{\mathrm{EWC}}{=}1000$. Fisher information is recomputed
on each participating client's local data after every round of local training,
and Table~\ref{tab:resource} includes this extra pass.

\noindent\textbf{FedLwF~\citep{10.1109/TPAMI.2017.2773081}.}\quad
Knowledge distillation temperature $T_{\mathrm{KD}}{=}2$, KD loss weight
$\lambda_{\mathrm{KD}}{=}3$.

\noindent\textbf{TARGET~\citep{Zhang_2023_ICCV}.}\quad
Server-side synthesis with a DeepInversion generator (latent dimension $256$,
base width $64$, batch size $256$) for $100$ synthesis rounds, with a $20$-round
warm-up and $400$ student distillation steps per round. Per synthesis round,
CIFAR-10/100 use $10$ generator steps, generator learning rate $0.002$, loss
weights $(\lambda_{\mathrm{BN}},\lambda_{\mathrm{OH}},\lambda_{\mathrm{adv}}){=}(10,0.5,1)$,
and student temperature $20$, and TinyImageNet uses TARGET's dataset-specific
setting of $50$ generator steps, learning rate $0.0002$, loss weights
$(0.1,0.1,1)$, and temperature $5$. Client-side distillation on the synthesized
data uses KD weight $25$ and temperature $2$.

\noindent\textbf{GLFC~\citep{Dong_2022_CVPR}.}\quad
Gradient compensation weights $\lambda_1{=}\lambda_2{=}0.5$. Exemplar memory
budget $20$ images per class (herding selection). Prototype perturbation takes $50$
SGD iterations with one tenth of the local learning rate and weight decay $10^{-5}$. Server-side
gradient inversion (LBFGS) runs $250$ iterations with learning rate $0.1$. LeNet-based encode
model for prototype gradient communication.

\noindent\textbf{FedCBDR~\citep{NEURIPS2025_d611d06e}.}\quad
Task-aware temperature scaling uses $\tau_{\mathrm{old}}{=}0.8$,
$\tau_{\mathrm{new}}{=}1.2$, $w_{\mathrm{old}}{=}1.2$, and
$w_{\mathrm{new}}{=}0.8$. Exemplar budget $150$ images per class.
Coreset selection via SVD leverage-score sampling across clients.

\noindent\textbf{Re-Fed~\citep{Li_2024_CVPR}.}\quad
Local storage budget $M{=}1000$ for CIFAR-10/100, $M{=}2000$ for TinyImageNet,
and $M{=}300$ for Office-31, covering the current-task data and the cached
previous samples together. The personalized informative model is trained for
$40$ epochs with learning rate $0.01$, batch size $64$, and proximal coefficient
$q(\lambda)=(1-\lambda)/(2\lambda)$ with $\lambda{=}0.5$. Previous samples are
ranked by gradient-norm importance with the early-emphasis $1/p$ weighting.

\noindent\textbf{FedDM~\citep{Xiong_2023_CVPR}.}\quad
Distribution matching with $\mathrm{IPC}{=}10$, $25$ condensation steps,
synthetic-image learning rate $1.0$, and Gaussian model perturbation with radius
$\rho{=}5$, matching feature and logit means. The server trains for $100$ epochs
per round on the surrogates received in that round only, with neither balanced
sampling nor logit adjustment.

\noindent\textbf{FedAF~\citep{Wang_2024_CVPR}.}\quad
Same condensation budget and server schedule as FedDM. Following the public
implementation, collaborative data condensation weight
$\lambda_{\mathrm{cdc}}{=}10^{-4}$, local-global knowledge matching weight
$\lambda_{\mathrm{lgkm}}{=}0.01$, soft-label temperature $2$, and model mixing
coefficient $0.9$. The perturbation noise is drawn at the scale of each weight
tensor, since unit-variance noise drives the ResNet-18 features to overflow.

\section{Extended Related Work}
\label{app:related}

This appendix expands the discussion of Section~\ref{sec:related}.

\subsection{Anti-Forgetting in Federated Continual Learning}

FCL methods are conventionally grouped into regularization-, distillation-,
architecture-, and replay-based families~\citep{10.1109/TPAMI.2024.3367329,
10.1016/j.neucom.2025.130844}. We summarize the representative methods of each
family and the limitation the family shares.

\noindent\textbf{Regularization-based methods.}\quad Regularization-based
methods preserve earlier-task knowledge by restraining the change of parameters
that contributed most to previous tasks while learning a new task.
EWC~\citep{doi:10.1073/pnas.1611835114} penalizes the change of each weight in
proportion to its Fisher importance for the earlier tasks. 
FedSSI~\citep{pmlr-v267-li25cq}
measures that importance on each client along the trajectory of a surrogate
model that is pulled toward the global model, so the weights protected on a
client are those that matter for the data of all clients.
FedKNOW~\citep{10184531} stores the most influential weights of each finished
task and projects the update of a new task away from the directions those tasks
depend on. 
The shared limitation is that the constraints accumulate with the task sequence.
Since every finished task restricts the weight change individually, the longer
the sequence, the less the model is allowed to change. Therefore, new knowledge 
is hindered more and more toward the end of the sequence.

\noindent\textbf{Distillation-based methods.}\quad 
Distillation-based methods set the earlier-task model as a teacher and force the 
current model to reproduce its outputs while learning the new task. 
LwF~\citep{10.1109/TPAMI.2017.2773081} records the outputs of the previous model
on the new-task images and penalizes deviation from them during new-task training.
CFeD~\citep{ijcai2022p303} applies distillation on both the clients and the server.
Each client learns the new task under the previous-task model as its teacher, 
and the server distills the received local models into the averaged global model. 
FedET~\citep{ijcai2023p443} learns small adapters on top of a frozen pre-trained
transformer, treating the adapter of the old tasks as the teacher for the one trained
on the new task. 
The common drawback of these methods is that the teacher is a stale model
trained on the earlier tasks alone, so forcing the current model to reproduce
its outputs hinders learning of the new task. 
Furthermore, in class-incremental learning the class distribution completely changes
from one task to the next, so the new task and the teacher model come from 
entirely different distributions, which makes it much harder to transfer knowledge.

\noindent\textbf{Architecture-based methods.}\quad
Architecture-based methods assign each task a dedicated subset of 
parameters or a separate module, so that the knowledge of a task 
stays untouched when later tasks are learned. 
FedWeIT~\citep{pmlr-v139-yoon21b} splits each client's model into a shared base
and sparse task-specific parameters, and exchanges the task-specific parameters
among clients. 
Powder~\citep{pmlr-v235-piao24a} attaches a separate set of prompts to 
a frozen pre-trained backbone for each task, and the server pools these 
prompts and shares the related ones across clients.
PILoRA~\citep{10.1007/978-3-031-73650-6_9} adds, for each task, a separate
low-rank adapter to a frozen pre-trained backbone, trained so that it does not
overlap with the adapters of earlier tasks.
In the federated setting, however, task-specific parameters are hard to
coordinate across clients whose task sequences and classes differ. Moreover, as
the parameters set aside for tasks accumulate, the capacity left for a new task
shrinks, as it does for regularization-based methods. Another drawback of Powder
and PILoRA is that they rest on a pre-trained backbone that every client must hold.

\noindent\textbf{Replay-based methods.}\quad Replay-based methods rehearse past
data alongside the new data and divide into exemplar-based and generative
methods by where that data comes from. 

\textit{Exemplar-based} methods store a
subset of raw samples on the clients and mix them into local training.
iCaRL~\citep{Rebuffi_2017_CVPR} selects exemplars by herding, and
GLFC~\citep{Dong_2022_CVPR} keeps a per-class exemplar memory on each
client and corrects the gradients of old classes during local
training. 
ECoral~\citep{sun2025exemplarcondensedfederatedclassincrementallearning} condenses
the stored exemplars so that each stored sample carries more information. 
Re-Fed~\citep{Li_2024_CVPR} caches the previous-task samples that a
personalized model scores as most informative. 
FedCBDR~\citep{NEURIPS2025_d611d06e}
lets the server choose the exemplars from obfuscated client features so that the
replay is class-balanced across clients. 
What limits these methods is that they burden the resource-poor clients.
Raw data of earlier tasks have to stay in the client's scarce, privacy-sensitive
storage, and the demand increases with every new task.
Furthermore, each client trains on the exemplars
alongside the current task and runs the exemplar selection itself, which
adds computation to every local update.

\textit{Generative} methods, inspired by deep generative replay in
centralized continual learning~\citep{NIPS2017_0efbe980}, synthesize past-class data
with a generator in place of stored samples. 
TARGET~\citep{Zhang_2023_ICCV}
inverts the previous global model into a generator on the server, sends the
generated images to the clients, and has each client train on them alongside its
new-task data while distilling the previous global model.
FedCIL~\citep{qi2023better} equips every client and the server with a
conditional generative adversarial network, 
so that each client replays its earlier classes from its own
generator, and the server consolidates the global model on data drawn from the
clients' generators. 
GenFCIL~\citep{10612802} trains a lightweight generator on
the server from the class representations that clients upload, and sends
generated features of the old classes back to the clients, which train on them
together with their new data. 
The difficulty of these methods is that the synthesized data are a poor
substitute for the real data of the earlier classes. 
A client-side generator is unstable under heterogeneous and small 
local data, and a server-side generator relies only on indirect signals of real data. 
In both cases the generated samples drift from the real distribution, 
which weakens the retention of earlier classes. 

\section{Theoretical Analysis of Temporal Herding}
\label{app:temporal_target}

This section provides the full theoretical analysis behind
Sec.~\ref{sec:theory_short}. Temporal herding uses the recent surrogates to
define a target mean but selects the retained surrogates from the entire task
pool. After fixing the notation (Sec.~\ref{app:theory_setup}), we first show why
the recent surrogates make a suitable target, since they carry the least
staleness under the task-end model (Proposition~\ref{lem:terminal_stale},
Sec.~\ref{app:theory_staleness}). We then bound the error of the finite buffer
built toward this target (Lemma~\ref{thm:greedy_target_bound},
Sec.~\ref{app:theory_bound}). Finally, we establish when this buffer represents
the real task-end class mean more closely than two alternatives, full
accumulation of the whole pool without any selection and full-pool herding toward
the mean of the whole pool with $\lambda{=}1$
(Theorem~\ref{thm:greedy_full_improvement} and Corollary~\ref{cor:fullpool},
Sec.~\ref{app:theory_full}). Remark~\ref{rem:temporal_herding_meaning} explains how
replaying such a buffer mitigates forgetting.

\subsection{Setup}
\label{app:theory_setup}

We fix task $t$ and class $c$. We use $R$ communication rounds per task,
$r\in\{1,\ldots,R\}$ to index a round within task $t$, and $B$ for the number
of class-$c$ surrogates retained at the task boundary. We define $P_r$ as the
index set of class-$c$ surrogates uploaded in round $r$, with surrogate $s_i$ for
each $i\in P_r$, and $\mathcal{I}=\{r:|P_r|>0\}$ as the set of rounds containing
at least one such surrogate. We then define
\[
 P=\bigcup_{r\in\mathcal{I}}P_r,
 \qquad
 W=\bigcup_{r\in\mathcal{I}_W}P_r
 \subseteq P
\]
as the candidate index set and the recent target index set, respectively. The
parameter $\lambda\in(0,1]$ specifies the fraction of recent communication
rounds used to form the target, and $w=\lceil\lambda R\rceil$ is the resulting
number of rounds. We define
$\mathcal{I}_W=\{r\in\mathcal{I}:r\ge R-w+1\}$. If this set is empty, we use
$\mathcal{I}_W=\{\max\mathcal{I}\}$. We denote the global model after the
server update in round $r$ by $\theta_t^{(r)}$. Hence
$\theta_t^{(R)}$ is the task-end model. We define the task-end feature vector
of each candidate and, for any nonempty index set $A\subseteq P$, its feature
mean as
\begin{equation}
 \vz_i
 =
 \frac{\phi_{\theta_t^{(R)}}(s_i)}
      {\|\phi_{\theta_t^{(R)}}(s_i)\|},
 \qquad
 \vm_A=\frac1{|A|}\sum_{i\in A}\vz_i .
\label{eq:set_feature_mean}
\end{equation}
Here $\vz_i$ is the unit-normalized task-end feature vector of surrogate $s_i$,
and $i$ indexes a candidate surrogate. In particular, $\vm_W$ is the
mean of the recent candidates' normalized feature vectors and serves as the
herding target.
As $\lambda$ is a fraction of rounds and $|P_r|$ varies across rounds, the
sample fraction $|W|/|P|$ differs from $\lambda$ in general.
Temporal herding first uses the recent index set $W$ to define the target mean
$\vm_W$, and then keeps every surrogate indexed by the full task pool $P$ eligible
and constructs a buffer by greedy selection toward this target. We use $\ell\in\{1,\ldots,B\}$ for this selection step and $H_\ell$ for
the indices selected through step $\ell$, with $H_0=\varnothing$. At step
$\ell$, $q\in P\setminus H_{\ell-1}$ is the index of a surrogate that has not
yet been selected, $s_q$ is the corresponding surrogate, and
$\vm_{H_{\ell-1}\cup\{q\}}$ is the feature mean that would result from adding
$s_q$. The symbol $i_\ell$ denotes the index of the surrogate actually selected
at step $\ell$. Temporal herding selects one surrogate without replacement at
each step according to
\begin{equation}
 i_\ell\in\argmin_{q\in P\setminus H_{\ell-1}}
 \|\vm_W-\vm_{H_{\ell-1}\cup\{q\}}\|^2,
 \qquad
 H_\ell=H_{\ell-1}\cup\{i_\ell\}.
\label{eq:temporal_herding}
\end{equation}

\subsection{Recent Surrogates Reduce Task-End Error}
\label{app:theory_staleness}

We first show why temporal herding takes the recent surrogates as its target.
A surrogate produced in round $r$ is matched to the real class under the model
available in that round, but temporal herding evaluates all candidates under
the task-end model. We quantify the mismatch introduced by this model
change. For a model $\theta$, let
\[
 \vm_r(\theta)=\frac1{|P_r|}\sum_{i\in P_r}\frac{\phi_\theta(s_i)}{\|\phi_\theta(s_i)\|},
 \qquad
 \vmu(\theta)=\mathbb E_{x\sim\mathcal D_c}
 \left[\frac{\phi_\theta(x)}{\|\phi_\theta(x)\|}\right]
\]
be the mean normalized feature of the round-$r$ surrogates and of the real
class-$c$ data distribution $\mathcal D_c$, respectively. Under the task-end
model, $\vm_r(\theta_t^{(R)})=\frac1{|P_r|}\sum_{i\in P_r}\vz_i$, and
$\vmu=\vmu(\theta_t^{(R)})$ is the real class mean that the buffer should
represent. The matching error of the round-$r$ surrogates is measured under the
global model $\theta_t^{(r-1)}$ used in that round,
\[
 \delta_r
 =
 \|\vm_r(\theta_t^{(r-1)})-\vmu(\theta_t^{(r-1)})\|.
\]

To transfer this matching error to the task-end feature space, we require that
normalized feature means change smoothly along the bounded within-task model
trajectory. This is a standard local smoothness condition, and it is needed only
over the parameters that the model visits within one task.

\begin{proposition}[Task-end surrogate error bound]
\label{lem:terminal_stale}
Suppose that for any two model states $\theta_t^{(a)}$ and $\theta_t^{(b)}$ on
the within-task trajectory, where $0\le a\le b\le R$, and each
$\vm\in\{\vmu\}\cup\{\vm_r:r\in\mathcal{I}\}$,
\begin{equation}
 \|\vm(\theta_t^{(b)})-\vm(\theta_t^{(a)})\|
 \le L_\phi\|\theta_t^{(b)}-\theta_t^{(a)}\|.
\label{eq:temporal_lip}
\end{equation}
Then the task-end error of the round-$r$ surrogates satisfies
\begin{equation}
 \|\vm_r(\theta_t^{(R)})-\vmu\|
 \le
 \delta_r+2L_\phi
 \|\theta_t^{(R)}-\theta_t^{(r-1)}\|.
\label{eq:terminal_stale}
\end{equation}
\end{proposition}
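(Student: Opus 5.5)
The plan is a triangle-inequality argument that moves the round-$r$ matching error from the round model $\theta_t^{(r-1)}$ to the task-end model $\theta_t^{(R)}$. We insert the two quantities evaluated at $\theta_t^{(r-1)}$ and write
\[
\vm_r(\theta_t^{(R)})-\vmu(\theta_t^{(R)})
=\bigl[\vm_r(\theta_t^{(R)})-\vm_r(\theta_t^{(r-1)})\bigr]
+\bigl[\vm_r(\theta_t^{(r-1)})-\vmu(\theta_t^{(r-1)})\bigr]
+\bigl[\vmu(\theta_t^{(r-1)})-\vmu(\theta_t^{(R)})\bigr].
\]
We then take norms and apply the triangle inequality. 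Recall that $\vmu=\vmu(\theta_t^{(R)})$ by definition and that $\vm_r(\theta_t^{(R)})$ is the mean of the $\vz_i$ over $P_r$.

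The middle term is exactly $\delta_r$ by its definition. The first and third terms are each a difference of one of the feature-mean maps $\vm\in\{\vmu\}\cup\{\vm_r:r\in\mathcal I\}$ between two trajectory states $\theta_t^{(a)}$ and $\theta_t^{(b)}$. Here $a=r-1$ and $b=R$, and $r\in\mathcal I$ because round $r$ contributes surrogates. Since $1\le r\le R$, we have $0\le r-1\le R$, so the index ordering $0\le a\le b\le R$ required by the hypothesis holds. The Lipschitz condition~\eqref{eq:temporal_lip} then bounds each of these two terms by $L_\phi\|\theta_t^{(R)}-\theta_t^{(r-1)}\|$. Summing the three bounds gives~\eqref{eq:terminal_stale}, and the factor $2$ comes from the two model-change terms.

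There is no real obstacle. The only point needing care is checking that the hypothesis applies to both $\vm_r$ and $\vmu$ with the correct ordered pair of indices, and that $\vm_r(\theta)$ is well defined as a function of $\theta$ for the fixed surrogate set $P_r$. Both are immediate from the statement. In particular, the surrogates $s_i$ for $i\in P_r$ are held fixed while only $\theta$ varies, which is exactly how $\vm_r(\cdot)$ is defined.
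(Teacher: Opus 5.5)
Your proposal is correct and follows the paper's proof: the paper likewise applies the triangle inequality through the round model $\theta_t^{(r-1)}$, identifies the middle term as $\delta_r$, and bounds the two model-change terms by the Lipschitz hypothesis to get the factor $2L_\phi$. Your explicit check that $r\in\mathcal{I}$ and that $0\le r-1\le R$, so the hypothesis applies with $a=r-1$ and $b=R$, is a detail the paper leaves implicit.
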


\noindent\textit{Proof.}
Applying the triangle inequality to the left-hand side of
Eq.~\eqref{eq:terminal_stale} through the model $\theta_t^{(r-1)}$ and then
Eq.~\eqref{eq:temporal_lip} gives
\begin{align*}
&\|\vm_r(\theta_t^{(R)})-\vmu(\theta_t^{(R)})\| \nonumber\\
&\le
 \|\vm_r(\theta_t^{(R)})-\vm_r(\theta_t^{(r-1)})\|
 +\|\vm_r(\theta_t^{(r-1)})-\vmu(\theta_t^{(r-1)})\| \nonumber\\
&\quad
 +\|\vmu(\theta_t^{(r-1)})-\vmu(\theta_t^{(R)})\| \\
&\le
 L_\phi\|\theta_t^{(R)}-\theta_t^{(r-1)}\|
 +\delta_r
 +L_\phi\|\theta_t^{(R)}-\theta_t^{(r-1)}\| \\
&=
 \delta_r+2L_\phi
 \|\theta_t^{(R)}-\theta_t^{(r-1)}\|.
\end{align*}

\begin{remark}[Recent targets reduce the staleness component]
\label{rem:task_end_staleness}
Proposition~\ref{lem:terminal_stale} decomposes the task-end error of a round's
surrogates into their matching error $\delta_r$ and a staleness term determined
by how far the model is subsequently trained. Recent surrogates are associated
with model states closer to $\theta_t^{(R)}$ and therefore reduce this
staleness component, which motivates defining the herding target with recent
rounds.
\end{remark}

\subsection{Temporal Herding Approximates the Target within a Bounded Error}
\label{app:theory_bound}

Proposition~\ref{lem:terminal_stale} motivates the recent target $\vm_W$, but ReSCENE retains only $B$
surrogates per class. We now show that the implemented greedy selection keeps
the finite buffer close to this target. Using the set-mean definition in
Eq.~\eqref{eq:set_feature_mean}, $\vm_P$ and $\vm_{H_B}$ denote the feature means
of the candidate pool and the final herded buffer, respectively. Both are
computed using the task-end model $\theta_t^{(R)}$. We also define
\[
 \Delta_W=\max_{i\in P}\|\vz_i-\vm_W\|,
\]
which is the maximum feature-space distance from a candidate to the recent
target $\vm_W$. As $\|\vz_i\|=1$ and $\vm_W$ lies in the unit ball,
$\Delta_W\le2$.

\medskip
\noindent\textbf{Tracking the greedy selection error.}\quad
To track how closely the selected candidates match $\vm_W$, we define the
residual before selection step $\ell$ as
\[
 \vr_{\ell-1}=(\ell-1)\vm_W-\sum_{j\in H_{\ell-1}}\vz_j,
 \qquad \vr_0=0.
\]
For $\ell>1$, the set-mean definition in Eq.~\eqref{eq:set_feature_mean} gives
\[
 \vr_{\ell-1}
 =(\ell-1)(\vm_W-\vm_{H_{\ell-1}}),
\]
so $\vr_{\ell-1}$ is the feature-mean error after $\ell-1$ selections, scaled
by the number of selected candidates. The base case $\vr_0=0$ follows from
$H_0=\varnothing$ before the first selection.

For an unselected candidate index $q\in P\setminus H_{\ell-1}$, the
hypothetical mean error after adding surrogate $s_q$ is
\begin{align*}
 \vm_W-\vm_{H_{\ell-1}\cup\{q\}}
 &=\frac1\ell
 \left[\vr_{\ell-1}-(\vz_q-\vm_W)\right],\nonumber\\
 \|\vm_W-\vm_{H_{\ell-1}\cup\{q\}}\|^2
 &=\frac1{\ell^2}
 \|\vr_{\ell-1}-(\vz_q-\vm_W)\|^2.
\end{align*}
The positive factor $1/\ell^2$ is shared by all unselected candidate indices
$q$, so the same surrogate minimizes both objectives and
Eq.~\eqref{eq:temporal_herding} is equivalent to
\begin{equation}
 i_\ell\in
 \argmin_{q\in P\setminus H_{\ell-1}}
 \|\vr_{\ell-1}-(\vz_q-\vm_W)\|^2.
\label{eq:residual_argmin}
\end{equation}
After temporal herding selects surrogate $s_{i_\ell}$, whose index is
$i_\ell$, the actual residual is updated as
\begin{equation}
 \vr_\ell
 =\vr_{\ell-1}-(\vz_{i_\ell}-\vm_W)
 =\ell(\vm_W-\vm_{H_\ell}).
\label{eq:residual_update}
\end{equation}
The residual thus gives an equivalent form of the greedy mean-matching rule, in
which each step picks the candidate that minimizes the next residual norm.

\medskip
\noindent\textbf{Condition for a finite-buffer guarantee.}\quad
This equivalence identifies what the greedy rule minimizes. To bound the residual
over all $B$ selections, the remaining pool must also retain a candidate that can
move the selected mean toward the target.

\begin{assumption}[Directional coverage]
\label{asm:directional_coverage}
For every $\ell=1,\ldots,B$, the remaining candidate-index set satisfies
\begin{equation}
 \max_{q\in P\setminus H_{\ell-1}}
 \langle \vr_{\ell-1},\vz_q-\vm_W\rangle\ge0.
\label{eq:directional_coverage}
\end{equation}
\end{assumption}

This condition is natural for temporal herding. The target $\vm_W$ is the mean
of the recent index set $W\subseteq P$, and the maximum of a set along any
direction is at least its mean, so for every direction some point of $W$ lies
at or beyond the target. Such a point, once added, moves the selected mean
toward the target, and the condition holds at step $\ell$ whenever such a point
of $W$ along $\vr_{\ell-1}$ has not yet been selected.
For $\ell\ge2$, the residual $\vr_{\ell-1}$ is a positive multiple of
$\vm_W-\vm_{H_{\ell-1}}$, so this is the condition stated in
Sec.~\ref{sec:theory_short}, and it holds trivially for $\ell=1$ since $\vr_0=0$.

\begin{lemma}[Finite-budget bound for greedy temporal herding]
\label{thm:greedy_target_bound}
For $B\le|P|$, under Assumption~\ref{asm:directional_coverage}, the distinct,
no-replacement greedy rule in \eqref{eq:temporal_herding} satisfies
\begin{equation}
 \|\vm_{H_B}-\vm_W\|\le\frac{\Delta_W}{\sqrt B}\le\frac{2}{\sqrt B}.
\label{eq:greedy_target_bound}
\end{equation}
\end{lemma}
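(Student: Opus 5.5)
The plan is to track the squared norm of the residual $\vr_\ell$ defined above and prove by induction on $\ell$ that
\[
\|\vr_\ell\|^2\le \ell\,\Delta_W^2,\qquad \ell=0,1,\ldots,B.
\]
Since $\vr_B=B(\vm_W-\vm_{H_B})$ by Eq.~\eqref{eq:residual_update}, this gives $\|\vm_{H_B}-\vm_W\|\le \sqrt{B}\,\Delta_W/B=\Delta_W/\sqrt B$. The second inequality, $\Delta_W\le 2$, follows because $\|\vz_i\|=1$ and $\|\vm_W\|\le 1$, which was already noted. The hypothesis $B\le|P|$ only guarantees that $P\setminus H_{\ell-1}$ is nonempty at every step, so the no-replacement greedy rule is well defined.

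The base case is $\vr_0=0$. For the inductive step, fix $\ell\ge1$. By Assumption~\ref{asm:directional_coverage} there is some $q^\star\in P\setminus H_{\ell-1}$ with $\langle \vr_{\ell-1},\vz_{q^\star}-\vm_W\rangle\ge 0$. The greedy choice $i_\ell$ minimizes $\|\vr_{\ell-1}-(\vz_q-\vm_W)\|^2$ over the remaining candidates (Eq.~\eqref{eq:residual_argmin}), so it does at least as well as $q^\star$. This yields the chain
\[
\|\vr_\ell\|^2\le \|\vr_{\ell-1}-(\vz_{q^\star}-\vm_W)\|^2
=\|\vr_{\ell-1}\|^2-2\langle \vr_{\ell-1},\vz_{q^\star}-\vm_W\rangle+\|\vz_{q^\star}-\vm_W\|^2
\le \|\vr_{\ell-1}\|^2+\Delta_W^2 .
\]
The last step drops the nonpositive cross term and uses $\|\vz_{q^\star}-\vm_W\|\le\Delta_W$ from the definition of $\Delta_W$ as a maximum over $P$. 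Combined with the induction hypothesis, this gives $\|\vr_\ell\|^2\le \ell\Delta_W^2$.

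There is no substantive obstacle. The one point that needs care is that the comparison candidate $q^\star$ must be an \emph{unselected} index, so that it competes in the same argmin as $i_\ell$. This is exactly why Assumption~\ref{asm:directional_coverage} is phrased over $P\setminus H_{\ell-1}$ rather than over all of $P$: without it, the no-replacement constraint could exhaust every direction-aligned candidate and the cross term could turn positive. I will also note explicitly that the equivalence between the mean-matching rule \eqref{eq:temporal_herding} and the residual rule \eqref{eq:residual_argmin}, established above, is what lets the greedy optimality be used on the residual.
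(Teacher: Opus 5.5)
Your proposal is correct and matches the paper's proof essentially step for step. Like the paper, you compare the greedy choice against the coverage candidate $q^\star$ in the residual form of the rule, expand the square, and drop the nonpositive cross term to get $\|\vr_\ell\|^2\le\|\vr_{\ell-1}\|^2+\Delta_W^2$, then induct from $\vr_0=0$ and divide $\vr_B=B(\vm_W-\vm_{H_B})$ by $B$. Your remark that $B\le|P|$ is what keeps the no-replacement argmin well defined is a small clarification the paper leaves implicit.
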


\noindent\textit{Proof.}
By Eq.~\eqref{eq:residual_argmin}, the implemented greedy rule selects the
surrogate whose index minimizes the new residual norm. By
Assumption~\ref{asm:directional_coverage}, there is an
unselected candidate index $q^\star$ with
$\langle \vr_{\ell-1},\vz_{q^\star}-\vm_W\rangle\ge0$. Greedy optimality therefore gives
\begin{align*}
 \|\vr_\ell\|^2
 &\le\|\vr_{\ell-1}-(\vz_{q^\star}-\vm_W)\|^2 \\
 &=\|\vr_{\ell-1}\|^2
 -2\langle\vr_{\ell-1},\vz_{q^\star}-\vm_W\rangle
 +\|\vz_{q^\star}-\vm_W\|^2 \\
 &\le\|\vr_{\ell-1}\|^2+\Delta_W^2.
\end{align*}
The last step uses the nonnegative inner product from the assumption and
$\|\vz_{q^\star}-\vm_W\|\le \Delta_W$. Induction from $\vr_0=0$ yields
$\|\vr_B\|^2\le B\Delta_W^2$. Finally, the residual definition and
$\sum_{i\in H_B}\vz_i=B\vm_{H_B}$ give
\[
 \vr_B=B(\vm_W-\vm_{H_B}).
\]
Hence $\|\vm_{H_B}-\vm_W\|=\|\vr_B\|/B\le \Delta_W/\sqrt B$, proving
\eqref{eq:greedy_target_bound}.

The approximation error of Lemma~\ref{thm:greedy_target_bound} shrinks as
$1/\sqrt B$ and is at most $2/\sqrt B$, so the finite buffer stays close to the
recent target that Proposition~\ref{lem:terminal_stale} motivates.

\subsection{When the Bounded Buffer Surpasses Full Accumulation}
\label{app:theory_full}

Lemma~\ref{thm:greedy_target_bound} shows that the buffer stays close to the
recent target. To compare it with full accumulation, we next describe how the
task-end error of the surrogates changes over rounds. Recall that $\vmu=\vmu(\theta_t^{(R)})$ is the real task-end class mean and that
$\vm_r(\theta_t^{(R)})=\frac1{n_r}\sum_{i\in P_r}\vz_i$, with $n_r=|P_r|$, is
the task-end mean of the surrogates produced in round $r$.
Proposition~\ref{lem:terminal_stale} bounds the task-end error of each round by a
staleness term that shrinks with $r$, which suggests that the magnitude of the
error decreases over rounds. Its direction is shared across rounds as well. Within a task the
model moves along a single trajectory from $\theta_t^{(0)}$ to $\theta_t^{(R)}$,
and a surrogate matched under an earlier state is displaced, under the task-end
model, by the change of the feature map along that trajectory, so the errors of
different rounds point in a common direction, with earlier rounds displaced
further. Assumption~\ref{asm:coherent_bias} states this structure, with a
deviation of bounded norm that covers client sampling and round-to-round
variation.

\begin{assumption}[Coherent stale bias]
\label{asm:coherent_bias}
There exist a unit vector $\vu$, nonnegative coefficients $a_r$ that are
non-increasing over $r\in\mathcal{I}$, and deviation vectors $\vxi_r$ such that
\begin{equation}
 \vm_r(\theta_t^{(R)})-\vmu=a_r\vu+\vxi_r,
 \qquad
 \|\vxi_r\|\le\sigma.
\label{eq:coherent_bias}
\end{equation}
\end{assumption}

Assumption~\ref{asm:coherent_bias} places little restriction on the errors
themselves, since the deviation $\vxi_r$ absorbs whatever part of the error is not
aligned with $\vu$, and any error vectors satisfy it with a large enough
$\sigma$. What matters is how large the coherent part is relative to
this deviation, which is exactly what the condition of
Theorem~\ref{thm:greedy_full_improvement} compares.

The sample-weighted systematic biases of the full and recent pools are
\[
 A_P
 =
 \frac{\sum_{r\in\mathcal{I}}n_ra_r}{\sum_{r\in\mathcal{I}}n_r},
 \qquad
 A_W
 =
 \frac{\sum_{r\in\mathcal{I}_W}n_ra_r}
      {\sum_{r\in\mathcal{I}_W}n_r}.
\]
As $W$ is a suffix and $a_r$ is non-increasing, every excluded early
coefficient is at least every included recent coefficient. Hence
$A_W\le A_P$, so the recent target removes a stale bias of magnitude
$A_P-A_W$ relative to full accumulation. It remains to determine when this removed
bias outweighs the deviation of the surrogates and the approximation error of
Lemma~\ref{thm:greedy_target_bound}.

\begin{theorem}[Greedy improvement over full accumulation]
\label{thm:greedy_full_improvement}
Under Assumptions~\ref{asm:directional_coverage} and
\ref{asm:coherent_bias}, the buffer selected by the greedy rule in
Eq.~\eqref{eq:temporal_herding} is strictly closer to $\vmu$ than full
accumulation whenever
\begin{equation}
 A_P-A_W>2\sigma+\frac{\Delta_W}{\sqrt B}.
\label{eq:bias_margin}
\end{equation}
Specifically,
\begin{equation}
 \|\vm_{H_B}-\vmu\|<\|\vm_P-\vmu\|.
\label{eq:herding_beats_full}
\end{equation}
\end{theorem}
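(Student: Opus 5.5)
The plan is to compare both errors through the decomposition of Assumption~\ref{asm:coherent_bias}: write each pooled mean as a weighted average of the round means and read off its error. For full accumulation,
\[
 \vm_P-\vmu=\frac{\sum_{r\in\mathcal I}n_r(\vm_r(\theta_t^{(R)})-\vmu)}{\sum_{r\in\mathcal I}n_r}=A_P\vu+\bar\vxi_P,
\]
where $\bar\vxi_P$ is the $n_r$-weighted average of the $\vxi_r$. Since each $\|\vxi_r\|\le\sigma$, the weighted average also satisfies $\|\bar\vxi_P\|\le\sigma$. The same computation over the suffix $\mathcal I_W$ gives $\vm_W-\vmu=A_W\vu+\bar\vxi_W$ with $\|\bar\vxi_W\|\le\sigma$. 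Both identities use only that $\vm_P$ and $\vm_W$ are sample-weighted averages of the round means $\vm_r(\theta_t^{(R)})$, which follows from Eq.~\eqref{eq:set_feature_mean}.

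Next I bound the full-accumulation error from below. Because $\vu$ is a unit vector and $A_P\ge0$,
\[
 \|\vm_P-\vmu\|\ge\langle\vm_P-\vmu,\vu\rangle\ge A_P-\|\bar\vxi_P\|\ge A_P-\sigma.
\]

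For the buffer I bound the error from above by the triangle inequality through the recent target $\vm_W$:
\[
 \|\vm_{H_B}-\vmu\|\le\|\vm_{H_B}-\vm_W\|+\|\vm_W-\vmu\|\le\frac{\Delta_W}{\sqrt B}+A_W+\sigma.
\]
The first term is controlled by Lemma~\ref{thm:greedy_target_bound}, which applies under Assumption~\ref{asm:directional_coverage} with $B\le|P|$. The second term uses $A_W\ge0$ and $\|\bar\vxi_W\|\le\sigma$. Combining the two bounds, the condition \eqref{eq:bias_margin} rearranges exactly to $A_W+\sigma+\Delta_W/\sqrt B<A_P-\sigma$, which gives the strict inequality \eqref{eq:herding_beats_full}.

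I do not expect a genuine obstacle. The only care needed is in the first step: checking that the round-wise decomposition survives the sample weighting, meaning the weights in $A_P$ and $A_W$ are exactly $n_r/\sum n_r$ as the set means require, and that the averaged deviations remain bounded by $\sigma$ by convexity of the norm ball. One must also make sure to lower-bound $\|\vm_P-\vmu\|$ by projecting onto $\vu$, rather than by a plain reverse triangle inequality, which would give the same bound here but for a less transparent reason. Finally, if $\mathcal I_W$ falls back to $\{\max\mathcal I\}$, the same argument goes through unchanged.
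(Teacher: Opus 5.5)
Your proposal is correct and matches the paper's proof. Both write $\vm_P-\vmu$ and $\vm_W-\vmu$ as $A\vu$ plus an averaged deviation of norm at most $\sigma$, and both bound $\|\vm_P-\vmu\|\ge A_P-\sigma$ from below. Both then get $\|\vm_{H_B}-\vmu\|\le A_W+\sigma+\Delta_W/\sqrt B$ from Lemma~\ref{thm:greedy_target_bound} and the triangle inequality, and compare the two bounds under the margin condition. The only difference is cosmetic: you get the lower bound by projecting onto $\vu$, while the paper uses the plain reverse triangle inequality, which is equally valid, so there is no need to insist on the projection.
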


\noindent\textit{Proof.}
Averages of the deviations in \eqref{eq:coherent_bias} have norm at most
$\sigma$.  The reverse triangle inequality therefore gives
$\|\vm_P-\vmu\|\ge A_P-\sigma$, while the triangle inequality gives
$\|\vm_W-\vmu\|\le A_W+\sigma$.  Lemma~\ref{thm:greedy_target_bound} and another
triangle inequality give
\[
 \|\vm_{H_B}-\vmu\|\le A_W+\sigma+\frac{\Delta_W}{\sqrt B}.
\]
Under Condition~\eqref{eq:bias_margin}, these bounds combine as
\[
 \|\vm_{H_B}-\vmu\|
 \le A_W+\sigma+\frac{\Delta_W}{\sqrt B}
 < A_P-\sigma
 \le \|\vm_P-\vmu\|,
\]
which proves Eq.~\eqref{eq:herding_beats_full}.

\begin{corollary}[Improvement over full-pool herding]
\label{cor:fullpool}
Let $H_B^{P}$ be the buffer selected by the same greedy rule with the full-pool
target $\vm_P$ in place of $\vm_W$ (that is, $\lambda{=}1$), and let
$\Delta_P=\max_{i\in P}\|\vz_i-\vm_P\|$. Under
Assumptions~\ref{asm:directional_coverage} (for both targets) and
\ref{asm:coherent_bias},
\[
 A_P-A_W>2\sigma+\frac{\Delta_W+\Delta_P}{\sqrt B}
 \quad\Longrightarrow\quad
 \|\vm_{H_B}-\vmu\|<\|\vm_{H_B^{P}}-\vmu\|.
\]
\end{corollary}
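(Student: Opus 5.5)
The plan is to sandwich both buffers between bounds driven by the coherent-bias decomposition, exactly as in the proof of Theorem~\ref{thm:greedy_full_improvement}, but now with an upper bound on the temporal-herding buffer and a lower bound on the full-pool-herding buffer. First, for the temporal buffer, we reuse the chain already established. Averaging Eq.~\eqref{eq:coherent_bias} over $W$ with weights $n_r$ gives $\vm_W-\vmu=A_W\vu+\bar\vxi_W$ with $\|\bar\vxi_W\|\le\sigma$. Combining this with Lemma~\ref{thm:greedy_target_bound} and the triangle inequality yields
\[
\|\vm_{H_B}-\vmu\|\le A_W+\sigma+\Delta_W/\sqrt B .
\]

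Second, for the full-pool buffer we apply Lemma~\ref{thm:greedy_target_bound} with $W$ replaced by $P$. The lemma's proof never used that the target set was a suffix. It used only the residual recursion, Assumption~\ref{asm:directional_coverage} for the chosen target, and the radius $\max_i\|\vz_i-\text{target}\|$. So under directional coverage for $\vm_P$ we get $\|\vm_{H_B^P}-\vm_P\|\le\Delta_P/\sqrt B$. Averaging Eq.~\eqref{eq:coherent_bias} over all of $P$ gives $\vm_P-\vmu=A_P\vu+\bar\vxi_P$ with $\|\bar\vxi_P\|\le\sigma$, hence $\|\vm_P-\vmu\|\ge A_P-\sigma$ by the reverse triangle inequality (using $A_P\ge0$ and $\|\vu\|=1$). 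A second reverse triangle inequality then gives
\[
\|\vm_{H_B^P}-\vmu\|\ge\|\vm_P-\vmu\|-\|\vm_{H_B^P}-\vm_P\|\ge A_P-\sigma-\Delta_P/\sqrt B .
\]

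Finally, the hypothesis $A_P-A_W>2\sigma+(\Delta_W+\Delta_P)/\sqrt B$ rearranges to $A_W+\sigma+\Delta_W/\sqrt B<A_P-\sigma-\Delta_P/\sqrt B$. Chaining this with the two bounds above gives the strict inequality.

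The only step needing care is the transfer of Lemma~\ref{thm:greedy_target_bound} to the target $\vm_P$. I would verify it by rewriting the residual $\vr_\ell=\ell\vm_P-\sum_{j\in H_\ell}\vz_j$ and noting that the equivalence to Eq.~\eqref{eq:residual_argmin}, the update Eq.~\eqref{eq:residual_update}, and the induction $\|\vr_\ell\|^2\le\|\vr_{\ell-1}\|^2+\Delta_P^2$ go through verbatim. This is precisely why the corollary assumes directional coverage for both targets. The remaining steps are triangle inequalities.
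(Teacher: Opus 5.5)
Your proposal is correct and follows the paper's proof. The paper likewise applies Lemma~\ref{thm:greedy_target_bound} to the full-pool target to get $\|\vm_{H_B^{P}}-\vm_P\|\le\Delta_P/\sqrt B$, and combines it with $\|\vm_P-\vmu\|\ge A_P-\sigma$ through the reverse triangle inequality. It then reuses the upper bound $A_W+\sigma+\Delta_W/\sqrt B$ from the proof of Theorem~\ref{thm:greedy_full_improvement} and lets the margin separate the two bounds. Your extra check, that the lemma's residual argument never uses that the target is a recent-window mean, is justification the paper leaves implicit. That argument needs only directional coverage for the chosen target, the radius $\Delta_P$, and $B\le|P|$.
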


\noindent\textit{Proof.}
Lemma~\ref{thm:greedy_target_bound} applied to the full-pool target gives
$\|\vm_{H_B^{P}}-\vm_P\|\le\Delta_P/\sqrt B$, so by the reverse triangle
inequality and $\|\vm_P-\vmu\|\ge A_P-\sigma$,
$\|\vm_{H_B^{P}}-\vmu\|\ge A_P-\sigma-\Delta_P/\sqrt B$. The proof of
Theorem~\ref{thm:greedy_full_improvement} gives
$\|\vm_{H_B}-\vmu\|\le A_W+\sigma+\Delta_W/\sqrt B$, and the stated margin makes
the upper bound smaller than the lower bound.

\medskip
\noindent\textbf{When the condition holds.}\quad
Condition~\eqref{eq:bias_margin} compares three quantities that are governed by
different parts of the setting. The deviation bound $\sigma$ is set by how far
the surrogates of a round deviate from their systematic bias, which comes from
client sampling and round-to-round variation, and the approximation error
$\Delta_W/\sqrt B$ is set by the budget, with $\Delta_W\le 2$ for unit-normalized
features. Neither depends on how many rounds a task runs. The removed bias
$A_P-A_W$ does. As the rounds proceed the model keeps learning, the surrogates of
the early rounds are matched under models ever farther from $\theta_t^{(R)}$, and
the bound of Proposition~\ref{lem:terminal_stale} on their task-end error grows
relative to that of the late rounds, so the gap between the average bias of the
whole pool and that of the recent window widens. With many rounds per task, as in
our protocol with $R{=}20$, the removed bias is therefore expected to exceed the
two fixed terms. This is the regime in which full accumulation keeps the most
surrogates and costs the most memory and server compute, and
Theorem~\ref{thm:greedy_full_improvement} states that a bounded buffer aimed at
the recent rounds then represents the class more closely than the whole pool, so
the saving in memory comes together with a gain in fidelity.

\begin{remark}[From reduced stale bias to forgetting mitigation]
\label{rem:temporal_herding_meaning}
Equation~\eqref{eq:herding_beats_full} shows that a bounded buffer aimed at the
recent rounds represents a completed task more closely than retaining every
surrogate. Full accumulation gives early, stale generations the same influence
on its mean as later generations. Temporal herding lets the recent generations
define the target while keeping every surrogate eligible for selection, so
useful earlier samples still enter the buffer while the recent generations set
its representation. The resulting finite buffer provides better-matched
old-class training signals during later replay, which supports forgetting
mitigation.
\end{remark}

\FloatBarrier

\section{Additional Experiments}
\label{app:experiments}

We report additional experiments of ReSCENE. We report the per-task accuracies
behind Table~\ref{tab:main} (Sec.~\ref{app:forgetting}), evaluate ReSCENE in the
domain-incremental setting (Sec.~\ref{sec:domain_plan}), and compare it with the
replay baselines under the same replay budget for a fair comparison
(Sec.~\ref{app:cost_memory}). We also sweep the two hyperparameters of temporal
herding (Sec.~\ref{sec:policy_ablation}) and ablate the class-imbalance
corrections of the server objective (Sec.~\ref{app:balance_logit_ablation}).

\subsection{Detailed Per-Task Results}
\label{app:forgetting}

AA and AIA in Table~\ref{tab:main} average the accuracy over tasks, so two
methods with the same average can forget in very different ways. To show how
each method retains the earlier tasks, we report the final accuracy of every task
after the complete sequence, for all methods, benchmarks, and $\beta$ values, in
Table~\ref{tab:forgetting}, and plot the same values in
Figure~\ref{fig:taskwise_accuracy}. T0 is the oldest task, so forgetting appears
as low accuracy on the left of each row.

\noindent\textbf{Per-task accuracy.}\quad
The baselines fall into three groups. FedAvg and FedEWC keep nothing of the
earlier tasks, and neither do the SA-FL methods FedDM and FedAF. Their accuracy is
at most $0.1\%$ on every task but the last, so their AA reduces to the accuracy
of the last task divided by the number of tasks. FedCBDR and Re-Fed retain the
earlier tasks in part, but their accuracy rises steeply toward the most recent
ones. FedCBDR on CIFAR-100 at $\beta{=}0.5$ reaches $66.9\%$ on T9 against
$22.1\%$ on T0, and Re-Fed on TinyImageNet at $\beta{=}0.5$ reaches $45.8\%$ on
T8 against $5.6\%$ on T0. GLFC and TARGET skew the other way on CIFAR-100 and
TinyImageNet and lose the newest tasks. At $\beta{=}1.0$, TARGET holds T0 at
$46.5\%$ on CIFAR-100 and $42.8\%$ on TinyImageNet, while its T9 falls to
$21.3\%$ and $6.4\%$. ReSCENE instead keeps every task within a high band. Its
lowest per-task accuracy exceeds the lowest of every baseline in all nine
settings, $36.2$ to $46.9\%$ against at most $24.4\%$ on CIFAR-10, $32.3$ to
$44.8\%$ against at most $17.8\%$ on CIFAR-100, and $19.9$ to $21.6\%$ against
at most $14.8\%$ on TinyImageNet. On CIFAR-10, it also holds the oldest task at
$75$ to $78\%$, where no baseline exceeds $39\%$.

\noindent\textbf{Accuracy across the task sequence.}\quad
Figure~\ref{fig:taskwise_accuracy} plots the per-task accuracies of
Table~\ref{tab:forgetting} as curves over the task index, which shows how each
method distributes its accuracy between the old and the recent tasks. The curves
of FedAvg and FedEWC stay at zero and rise only at the final task, those of
FedCBDR and Re-Fed climb toward the most recent tasks, and those of GLFC and
TARGET decline toward them on CIFAR-100 and TinyImageNet. The curve of ReSCENE is
not flat, since it dips on harder tasks such as T1 on CIFAR-10 and varies by
$23$ points on TinyImageNet at $\beta{=}1.0$. It nevertheless stays high across
the whole sequence, and its lowest point lies above the lowest point of every
baseline in every panel, so no task is sacrificed regardless of when it arrived.
This follows from the server replaying every completed task in every round.

\begin{figure*}[t]
\centering
\includegraphics[width=0.94\textwidth]{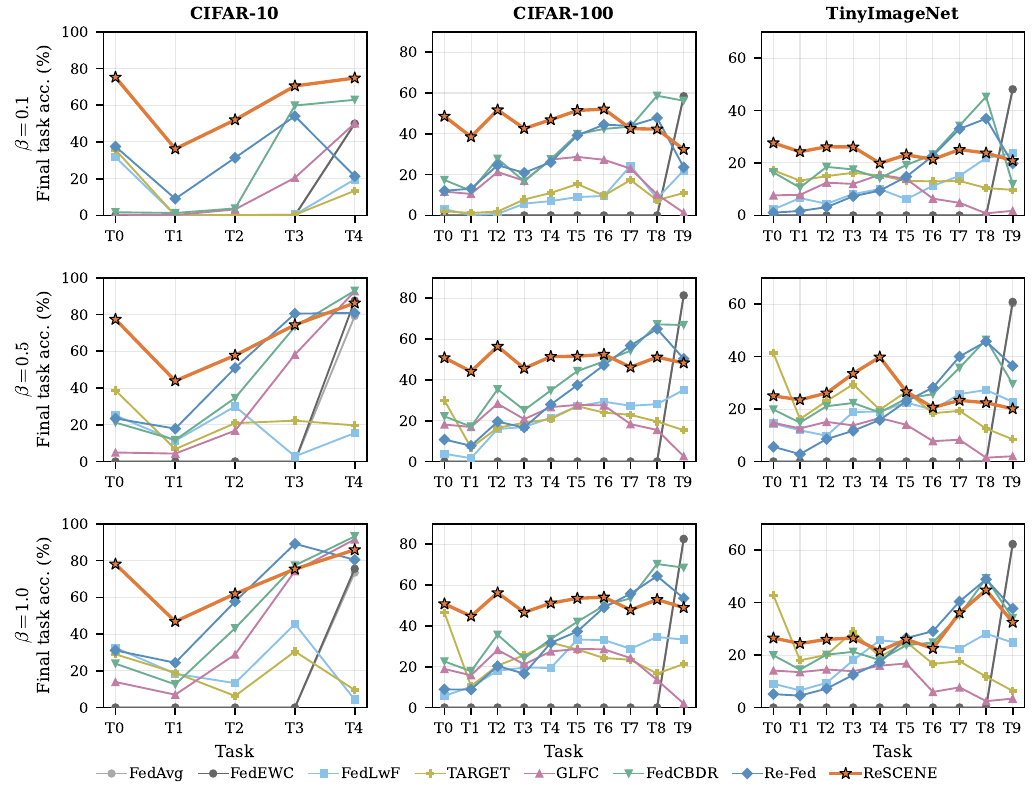}
\caption{Final accuracy of each task after the complete sequence for the methods
of Table~\ref{tab:main}, one column per benchmark and one row per $\beta$
(values in Table~\ref{tab:forgetting}).}
\label{fig:taskwise_accuracy}
\end{figure*}

\subsection{Domain-Incremental Learning}
\label{sec:domain_plan}

Federated incremental learning is defined in prior work by two scenarios, new
classes and new domains~\citep{Li_2024_CVPR}, and the main text covers the first.
We evaluate whether server-side surrogate replay also holds when each task
changes the visual domain while the label space stays fixed.

\begin{wraptable}{r}{0.50\textwidth}
\vspace{-9pt}
\centering
\small
\renewcommand{\arraystretch}{0.92}
\setlength{\tabcolsep}{1.5pt}
\begin{tabular}{@{}lccc|cc@{}}
\toprule
& \multicolumn{3}{c|}{Domain} & \multicolumn{2}{c}{Accuracy}\\
Method & Amazon & DSLR & Webcam & AA & AIA\\
\midrule
FedAvg  & 18.83 & 23.23 & 65.19 & 35.75 & 36.21\\
FedEWC  & 19.18 & 21.21 & 61.39 & 33.93 & 35.94\\
FedLwF  & 25.75 & 37.37 & 56.96 & 40.03 & 37.07\\
TARGET  & 29.48 & 20.20 & 39.87 & 29.85 & 34.37\\
GLFC    & 36.94 & 17.17 & 11.39 & 21.84 & 22.81\\
FedCBDR & 55.42 & 48.48 & 53.80 & 52.57 & 49.33\\
Re-Fed  & 48.31 & 42.42 & 53.80 & 48.18 & 47.77\\
\rowcolor{oursbg}ReSCENE & \textbf{56.48} & \textbf{65.66} & \textbf{76.58} & \textbf{66.24} & \textbf{56.10}\\
\bottomrule
\end{tabular}
\caption{Domain-incremental results on Office-31 (\%). Final accuracy on each
domain, AA, and AIA.}
\label{tab:office31}
\vspace{-7pt}
\end{wraptable}

\noindent\textbf{Setting.}\quad
We learn the Amazon, DSLR, and Webcam domains of Office-31~\citep{Saenko2010AdaptingVC}
in sequence under a single shared $31$-class classifier. Following
Re-Fed~\citep{Li_2024_CVPR} we set $N{=}10$ clients, $K{=}4$ participants,
$\beta{=}1.0$, $20$ rounds per domain, and $20$ local epochs for the
model-training baselines, and everything else follows Table~\ref{tab:main}.

\noindent\textbf{Results.}\quad
ReSCENE attains the highest accuracy on every domain (Table~\ref{tab:office31}), reaching $56.48\%$,
$65.66\%$, and $76.58\%$ on Amazon, DSLR, and Webcam, while FedCBDR reaches $55.42/48.48/53.80$ and
Re-Fed reaches $48.31/42.42/53.80$. This per-domain advantage shows that uploading
condensed surrogates and training a single model on the server is more effective
than training separate client models and aggregating their weights.
Furthermore, ReSCENE reaches $66.24\%$ AA and $56.10\%$ AIA, the best of all
methods. As it gathers the surrogates of all domains at the server and
trains one model over them, ReSCENE maintains high accuracy on the earlier
domains, preserving knowledge across domain transitions and limiting forgetting
as the input distribution shifts. Overall, ReSCENE shows its effectiveness not only in
class-incremental settings but also under domain-incremental distribution shift.

\begin{table}[t]
\centering
\scriptsize
\setlength{\tabcolsep}{2.2pt}
\renewcommand{\arraystretch}{1.0}
\resizebox{0.80\textwidth}{!}{%
\begin{tabular}{@{}l|ccccc|ccccc|ccccc@{}}
\toprule
& \multicolumn{5}{c|}{$\beta{=}0.1$} & \multicolumn{5}{c|}{$\beta{=}0.5$} & \multicolumn{5}{c}{$\beta{=}1.0$}\\
Method & T0 & T1 & T2 & T3 & T4 & T0 & T1 & T2 & T3 & T4 & T0 & T1 & T2 & T3 & T4\\
\midrule
FedAvg  & 0.0 & 0.0 & 0.0 & 0.0 & 50.0 & 0.0 & 0.0 & 0.0 & 0.0 & 79.2 & 0.0 & 0.0 & 0.0 & 0.0 & 73.5\\
FedEWC  & 0.0 & 0.0 & 0.0 & 0.0 & 50.0 & 0.0 & 0.0 & 0.0 & 0.0 & 87.6 & 0.0 & 0.0 & 0.0 & 0.0 & 75.6\\
FedLwF  & 31.7 & 0.2 & 0.0 & 0.4 & 19.6 & 25.1 & 11.3 & 30.0 & 2.8 & 15.5 & 32.6 & 18.4 & 13.3 & 45.5 & 4.3\\
TARGET  & 35.8 & 0.0 & 0.2 & 0.4 & 13.4 & 38.7 & 6.7 & 20.9 & 22.2 & 19.6 & 29.0 & 18.9 & 6.3 & 30.6 & 9.5\\
GLFC    & 0.4 & 0.2 & 3.3 & 20.5 & 50.1 & 4.9 & 4.3 & 16.8 & 58.2 & 92.7 & 13.9 & 7.0 & 28.9 & 74.3 & 91.6\\
FedCBDR & 1.7 & 1.3 & 3.8 & 59.9 & 63.0 & 21.2 & 11.7 & 34.6 & 73.4 & 93.0 & 23.9 & 12.6 & 43.2 & 77.4 & 93.2\\
Re-Fed  & 37.5 & 9.1 & 31.4 & 54.1 & 21.3 & 23.4 & 17.9 & 50.9 & 80.5 & 80.9 & 31.1 & 24.4 & 57.8 & 89.2 & 80.4\\
FedDM   & 0.0 & 0.0 & 0.0 & 0.0 & 78.5 & 0.0 & 0.0 & 0.0 & 0.0 & 88.6 & 0.0 & 0.0 & 0.0 & 0.0 & 90.5\\
FedAF   & 0.0 & 0.0 & 0.0 & 0.0 & 80.0 & 0.0 & 0.0 & 0.0 & 0.0 & 88.5 & 0.0 & 0.0 & 0.0 & 0.0 & 90.3\\
\rowcolor{oursbg}ReSCENE & 75.3 & 36.2 & 52.1 & 70.5 & 74.8 & 77.4 & 43.9 & 57.8 & 74.4 & 86.3 & 78.1 & 46.9 & 61.9 & 75.4 & 85.9\\
\bottomrule
\end{tabular}}

\smallskip
(a) CIFAR-10

\medskip
\resizebox{\textwidth}{!}{%
\begin{tabular}{@{}ll|cccccccccc|cccccccccc@{}}
\toprule
& & \multicolumn{10}{c|}{CIFAR-100} & \multicolumn{10}{c}{TinyImageNet}\\
Method & $\beta$ & T0 & T1 & T2 & T3 & T4 & T5 & T6 & T7 & T8 & T9 & T0 & T1 & T2 & T3 & T4 & T5 & T6 & T7 & T8 & T9\\
\midrule
FedAvg  & 0.1 & 0.0 & 0.0 & 0.0 & 0.0 & 0.0 & 0.0 & 0.0 & 0.0 & 0.0 & 58.0 & 0.0 & 0.0 & 0.0 & 0.0 & 0.0 & 0.0 & 0.0 & 0.0 & 0.0 & 48.1\\
FedEWC  & 0.1 & 0.0 & 0.0 & 0.0 & 0.0 & 0.0 & 0.0 & 0.0 & 0.0 & 0.0 & 58.4 & 0.0 & 0.0 & 0.0 & 0.0 & 0.0 & 0.0 & 0.0 & 0.0 & 0.0 & 48.0\\
FedLwF  & 0.1 & 3.1 & 0.2 & 0.8 & 5.8 & 7.0 & 9.0 & 9.5 & 24.3 & 8.3 & 22.1 & 2.3 & 6.6 & 4.5 & 8.2 & 10.1 & 6.2 & 11.3 & 15.1 & 21.9 & 23.8\\
TARGET  & 0.1 & 2.0 & 1.2 & 1.9 & 7.8 & 11.0 & 15.4 & 9.7 & 17.3 & 7.5 & 11.0 & 17.2 & 13.2 & 15.0 & 16.2 & 15.2 & 13.2 & 13.0 & 13.2 & 10.4 & 9.8\\
GLFC    & 0.1 & 11.7 & 10.5 & 21.3 & 17.0 & 27.4 & 28.7 & 27.3 & 22.9 & 10.2 & 1.4 & 7.7 & 7.9 & 12.5 & 12.0 & 15.5 & 13.9 & 6.4 & 4.8 & 0.8 & 1.8\\
FedCBDR & 0.1 & 17.4 & 12.1 & 27.8 & 16.7 & 27.5 & 39.8 & 42.4 & 43.5 & 58.7 & 56.2 & 16.4 & 10.6 & 18.5 & 17.5 & 14.2 & 19.2 & 23.2 & 34.2 & 45.3 & 11.9\\
Re-Fed  & 0.1 & 12.1 & 13.1 & 24.6 & 20.9 & 26.1 & 39.2 & 44.5 & 43.9 & 47.8 & 23.6 & 1.1 & 1.7 & 3.1 & 7.3 & 9.4 & 14.8 & 22.8 & 33.0 & 36.9 & 19.8\\
FedDM   & 0.1 & 0.0 & 0.0 & 0.0 & 0.0 & 0.0 & 0.0 & 0.0 & 0.0 & 0.0 & 62.7 & 0.0 & 0.0 & 0.0 & 0.0 & 0.0 & 0.0 & 0.0 & 0.0 & 0.0 & 25.2\\
FedAF   & 0.1 & 0.0 & 0.0 & 0.0 & 0.0 & 0.0 & 0.0 & 0.0 & 0.0 & 0.0 & 63.7 & 0.0 & 0.0 & 0.0 & 0.0 & 0.0 & 0.0 & 0.0 & 0.0 & 0.0 & 25.8\\
\rowcolor{oursbg}ReSCENE & 0.1 & 48.6 & 38.5 & 51.7 & 42.6 & 46.9 & 51.4 & 52.1 & 42.5 & 42.3 & 32.3 & 27.6 & 24.3 & 26.2 & 26.1 & 19.9 & 23.1 & 21.4 & 25.1 & 23.8 & 20.9\\
\midrule
FedAvg  & 0.5 & 0.0 & 0.0 & 0.0 & 0.0 & 0.0 & 0.0 & 0.0 & 0.0 & 0.0 & 81.4 & 0.0 & 0.0 & 0.0 & 0.0 & 0.0 & 0.0 & 0.0 & 0.0 & 0.1 & 60.3\\
FedEWC  & 0.5 & 0.0 & 0.0 & 0.0 & 0.0 & 0.0 & 0.0 & 0.0 & 0.0 & 0.0 & 81.4 & 0.0 & 0.0 & 0.0 & 0.0 & 0.0 & 0.0 & 0.0 & 0.0 & 0.0 & 60.8\\
FedLwF  & 0.5 & 3.8 & 1.6 & 15.9 & 16.9 & 21.4 & 27.1 & 29.3 & 27.3 & 28.1 & 35.1 & 14.4 & 11.9 & 9.8 & 18.9 & 19.0 & 22.5 & 19.7 & 25.7 & 27.2 & 22.8\\
TARGET  & 0.5 & 29.9 & 6.8 & 16.5 & 19.0 & 20.8 & 27.1 & 23.9 & 22.9 & 19.6 & 15.3 & 41.3 & 16.2 & 23.0 & 29.4 & 19.8 & 26.6 & 18.4 & 19.3 & 12.6 & 8.4\\
GLFC    & 0.5 & 18.1 & 16.9 & 28.3 & 20.8 & 26.6 & 27.8 & 27.5 & 18.3 & 15.4 & 2.5 & 14.8 & 12.6 & 15.1 & 13.8 & 16.6 & 14.0 & 7.8 & 8.3 & 1.4 & 2.0\\
FedCBDR & 0.5 & 22.1 & 17.1 & 35.4 & 25.2 & 34.8 & 44.3 & 49.1 & 54.3 & 67.2 & 66.9 & 19.8 & 14.8 & 21.1 & 22.3 & 18.7 & 24.0 & 25.7 & 35.7 & 46.4 & 29.6\\
Re-Fed  & 0.5 & 10.8 & 7.7 & 19.5 & 16.6 & 27.9 & 37.4 & 47.2 & 56.8 & 65.0 & 50.3 & 5.6 & 2.8 & 8.6 & 11.7 & 15.8 & 22.9 & 28.2 & 40.0 & 45.8 & 36.5\\
\rowcolor{oursbg}ReSCENE & 0.5 & 50.8 & 44.1 & 56.4 & 45.6 & 51.4 & 51.6 & 52.5 & 46.2 & 51.1 & 48.3 & 25.0 & 23.5 & 26.1 & 33.5 & 39.8 & 26.6 & 20.6 & 23.3 & 22.4 & 20.1\\
\midrule
FedAvg  & 1.0 & 0.0 & 0.0 & 0.0 & 0.0 & 0.0 & 0.0 & 0.0 & 0.0 & 0.0 & 82.4 & 0.0 & 0.0 & 0.0 & 0.0 & 0.0 & 0.0 & 0.0 & 0.0 & 0.1 & 62.1\\
FedEWC  & 1.0 & 0.0 & 0.0 & 0.0 & 0.0 & 0.0 & 0.0 & 0.0 & 0.0 & 0.0 & 82.7 & 0.0 & 0.0 & 0.0 & 0.0 & 0.0 & 0.0 & 0.0 & 0.0 & 0.0 & 62.3\\
FedLwF  & 1.0 & 5.7 & 10.2 & 18.0 & 19.6 & 19.4 & 33.3 & 33.1 & 28.8 & 34.6 & 33.3 & 9.1 & 6.6 & 9.4 & 18.0 & 25.6 & 24.7 & 23.6 & 22.4 & 28.1 & 24.8\\
TARGET  & 1.0 & 46.5 & 10.5 & 20.6 & 25.6 & 31.8 & 28.4 & 24.3 & 23.4 & 16.6 & 21.3 & 42.8 & 18.0 & 20.1 & 28.9 & 19.4 & 25.0 & 16.6 & 17.6 & 11.8 & 6.4\\
GLFC    & 1.0 & 18.9 & 15.8 & 28.2 & 21.2 & 27.5 & 28.7 & 28.6 & 24.2 & 13.5 & 2.0 & 14.1 & 13.5 & 14.5 & 13.8 & 15.9 & 16.8 & 5.9 & 7.7 & 2.5 & 3.4\\
FedCBDR & 1.0 & 22.6 & 17.8 & 35.7 & 24.1 & 33.5 & 42.0 & 50.0 & 53.5 & 70.3 & 68.5 & 19.9 & 14.4 & 20.1 & 21.2 & 18.1 & 23.8 & 24.8 & 35.2 & 49.2 & 34.1\\
Re-Fed  & 1.0 & 8.9 & 8.8 & 20.4 & 16.6 & 31.7 & 37.3 & 49.0 & 55.8 & 64.4 & 53.5 & 5.1 & 4.5 & 7.2 & 12.4 & 17.1 & 26.5 & 29.1 & 40.5 & 48.9 & 37.8\\
\rowcolor{oursbg}ReSCENE & 1.0 & 50.8 & 44.8 & 56.2 & 46.6 & 51.2 & 53.5 & 54.1 & 47.9 & 52.9 & 49.0 & 26.5 & 24.4 & 26.0 & 26.5 & 21.6 & 26.0 & 22.5 & 36.1 & 44.8 & 32.5\\
\bottomrule
\end{tabular}}

\smallskip
(b) CIFAR-100 and TinyImageNet
\caption{Final accuracy (\%) on each task after the complete sequence. T0 is
the oldest task, and each row averages to the AA of the run.}
\label{tab:forgetting}
\end{table}

\subsection{Comparison under a Matched Replay Budget}
\label{app:cost_memory}

As in ReSCENE, the replay baselines revisit data of earlier tasks during training,
so the amount of replay data could explain the gap between them. At the default
budgets of Table~\ref{tab:main}, ReSCENE already holds fewer replay images than
FedCBDR and TARGET and still outperforms them. For a fair comparison, we rerun
every replay baseline with the same total number of replay images as ReSCENE.

\noindent\textbf{Setting.}\quad
We use CIFAR-10 at $\beta{=}0.5$ with $N{=}20$ clients and count the replay images
held by the server and all clients at the end of the sequence. Each method has one
memory parameter, which Table~\ref{tab:memory_matched} lists as Budget. It is the
number of exemplars per class on every client for GLFC and FedCBDR, the number of
synthetic images per task for TARGET, the total storage of a client for Re-Fed,
and the per-class server buffer $B$ for ReSCENE. We set each budget so that the
total becomes ReSCENE's $10{,}000$ images. TARGET keeps its synthesis schedule
while each client receives a $500$-image subset, and Re-Fed already holds
$10{,}253$ images at its default and is left unchanged.

\begin{table}[h]
\centering
\footnotesize
\setlength{\tabcolsep}{5pt}
\renewcommand{\arraystretch}{1.08}
\begin{tabular}{@{}l|rrrrr|rrrr@{}}
\toprule
& \multicolumn{5}{c|}{Setting of Table~\ref{tab:main}}
& \multicolumn{4}{c}{Matched setting (10,000 in total)}\\
\cmidrule(lr){2-6}\cmidrule(lr){7-10}
Method & Budget & Per client & Server $+$ & AA & AIA & Budget & Server $+$ & AA & AIA\\
       &        &            & clients    &    &     &        & clients    &    &    \\
\midrule
TARGET   & 25,600 & 25,600    & 512,000 & 21.66 & 36.76 & 500     & 10,000 & 19.62 & 37.62\\
GLFC     & 20       & 200         & 4,000   & 35.38 & 48.90 & 50      & 10,000 & 40.67 & 51.69\\
FedCBDR  & 150      & 1,500     & 30,000  & 46.78 & 57.64 & 50      & 10,000 & 32.85 & 49.86\\
Re-Fed   & 1,000  & $\approx$513 & 10,253  & 50.73 & 62.69 & 1,000 & 10,253 & 50.73 & 62.69\\
\rowcolor{oursbg}ReSCENE  & 1,000  & 0           & 10,000  & \textbf{67.96} & \textbf{73.53} & 1,000 & 10,000 & \textbf{67.96} & \textbf{73.53}\\
\bottomrule
\end{tabular}
\caption{Replay images and accuracy on CIFAR-10 ($\beta{=}0.5$) at the budgets
of Table~\ref{tab:main} and after matching every baseline to ReSCENE's
$10{,}000$ images. Budget denotes the memory parameter of each method, defined in the
text.}
\label{tab:memory_matched}
\end{table}

\noindent\textbf{Results.}\quad
With the same $10{,}000$ replay images, ReSCENE outperforms every replay baseline
by $17.2$ to $48.3$ AA points and by $10.8$ to $35.9$ AIA points
(Table~\ref{tab:memory_matched}), so matching the budget does not close the gap.
GLFC gains $5.29$ AA points when its memory grows to $50$ exemplars per class, and
FedCBDR loses $13.93$ points when its memory shrinks to the same size, which shows
how strongly client-side replay depends on the amount of stored data. In the
exemplar-based methods, the budget is split across $20$ clients, so each client
replays only a small, skewed cache of its own data, and the knowledge of these
caches is merged only through weight averaging. ReSCENE instead keeps a single copy
of the replay data at the server and trains on it together with the current task
in one objective at every round. Its advantage therefore comes from where and how
the replay data is used, which relies on a resource-rich server that stores the
buffer and trains the model.

\subsection{Additional Herding-Buffer Sweeps}
\label{sec:lambda_sweep}
\label{sec:policy_ablation}

Temporal herding has two hyperparameters, the fraction $\lambda$ of the recent
rounds that defines the target and the per-class budget $B$. The main text fixes
them at $\lambda{=}0.75$ and $B{=}1000$, and we examine how each of them affects
the accuracy.

\noindent\textbf{Setting.}\quad
On CIFAR-10 at $\beta{=}0.5$, we vary $\lambda\in\{0.25,0.5,0.75,1\}$ at
$B{=}1000$ and $B\in\{500,1000,1500,2000\}$ at $\lambda{=}0.75$, keeping the rest of
the protocol. The whole task pool stays eligible for selection in every run, and
only the target or the budget changes, so $\lambda{=}1$ becomes full-pool herding.

\noindent\textbf{Temporal target fraction.}\quad
$\lambda{=}0.75$ attains the highest AA and AIA among the four values
(Table~\ref{tab:buffer_budget}a). A small $\lambda$ defines the target from only
the last few rounds, so the target rests on few surrogates and can be biased toward
the participants of those rounds, which lowers AA to $66.92$ at $\lambda{=}0.25$.
At the other end, $\lambda{=}1$ averages over every round, including the stale early
surrogates, and reduces to full-pool herding with $66.70$ AA. The intermediate
window removes the earliest rounds from the target while still averaging over
enough surrogates. This matches the analysis of Sec.~\ref{sec:theory_short}, where
moving the target to the recent window lowers the stale bias from $A_P$ to $A_W$
(Corollary~\ref{cor:fullpool}), while a window that is too narrow leaves the target to
the scatter of a few rounds.

\begin{wraptable}{r}{0.40\textwidth}
\vspace{-10pt}
\centering
\footnotesize
\setlength{\tabcolsep}{3pt}
\renewcommand{\arraystretch}{1.05}
\resizebox{\linewidth}{!}{%
\begin{tabular}{@{}ccc@{\hspace{10pt}}ccc@{}}
\toprule
\multicolumn{3}{c}{(a) Target fraction} & \multicolumn{3}{c}{(b) Budget}\\
\cmidrule(r{6pt}){1-3}\cmidrule(l){4-6}
$\lambda$ & AA & AIA & $B$ & AA & AIA\\
\midrule
0.25 & 66.92 & 73.15 & 500  & 61.11 & 69.12\\
0.50 & 67.16 & 73.35 & \textbf{1000} & 67.96 & 73.53\\
\textbf{0.75} & \textbf{67.96} & \textbf{73.53} & 1500 & 69.43 & 74.07\\
1.00 & 66.70 & 72.86 & 2000 & 68.36 & 73.86\\
\bottomrule
\end{tabular}}
\caption{Temporal herding sweeps on CIFAR-10 ($\beta{=}0.5$). (a) varies
$\lambda$ at $B{=}1000$, and (b) varies $B$ at $\lambda{=}0.75$. Bold marks the
default.}
\label{tab:buffer_budget}
\vspace{-4pt}
\end{wraptable}

\noindent\textbf{Per-class budget.}\quad
Raising $B$ from $500$ to $1000$ improves AA by $6.85$ points
(Table~\ref{tab:buffer_budget}b), since Lemma~\ref{thm:greedy_target_bound}
bounds the error of summarizing the target by $\Delta_W/\sqrt B$, which is $41\%$
larger at $B{=}500$. On CIFAR-10, a class collects at most $2{,}000$ surrogates in a task and about
$1{,}700$ on average, so $B{=}2000$ keeps the whole pool and coincides with full
accumulation. The bounded buffer at $B{=}1000$ stays within $1.5$ AA points of the
larger budgets while holding $40\%$ fewer surrogates than the whole pool, so
temporal herding summarizes each task with far less server memory and replay
computation. This advantage grows with the length of the class sequence, since on
CIFAR-100 and TinyImageNet the bounded buffer is also more accurate than full
accumulation (Sec.~\ref{sec:herding_main}), in line with
Theorem~\ref{thm:greedy_full_improvement}. We therefore adopt $B{=}1000$ as the
default, which balances accuracy against the recurring server cost.

\subsection{Ablation of Imbalance Resolution}
\label{app:balance_logit_ablation}

The server's training pool is skewed toward the current task, since every
earlier class is capped at $B$ surrogates while the current task receives new
surrogates in every round (Sec.~\ref{sec:imbalance}). ReSCENE counters this skew
with balanced sampling and logit adjustment, and we measure how much each of them
contributes.

\begin{wraptable}{r}{0.42\textwidth}
\vspace{-10pt}
\centering
\footnotesize
\renewcommand{\arraystretch}{1.08}
\setlength{\tabcolsep}{3.5pt}
\begin{tabular}{@{}lcc|cc@{}}
\toprule
& & & \multicolumn{2}{c}{Accuracy}\\
Variant & $\alpha$ & $\tau$ & AA & AIA\\
\midrule
Natural                & 1.0 & 0.0 & 67.26 & 72.55\\
Sampling only          & 0.5 & 0.0 & 67.64 & 72.65\\
Logit only             & 1.0 & 1.0 & 67.70 & 73.26\\
\textbf{ReSCENE}       & 0.5 & 1.0 & \textbf{67.96} & \textbf{73.53}\\
\bottomrule
\end{tabular}
\caption{Ablation of class-imbalance resolution on CIFAR-10 ($\beta{=}0.5$).}
\label{tab:balance_logit_ablation}
\vspace{-4pt}
\end{wraptable}

\noindent\textbf{Setting.}\quad
On CIFAR-10 at $\beta{=}0.5$, we switch each correction on and off through its
coefficient. The sampling exponent $\alpha$ sets how often each class enters a
batch, from count-proportional sampling at $\alpha{=}1$, which follows the skewed
pool, to square-root sampling at $\alpha{=}0.5$. The logit-adjustment strength
$\tau$ sets how strongly the classifier is corrected by the pool prior, from no
correction at $\tau{=}0$ to the standard adjustment at $\tau{=}1$. Natural training
($\alpha{=}1$, $\tau{=}0$) uses neither correction, and ReSCENE
($\alpha{=}0.5$, $\tau{=}1$) uses both.

\noindent\textbf{Results.}\quad
Natural training already reaches $67.26$ AA and $72.55$ AIA
(Table~\ref{tab:balance_logit_ablation}), so most of ReSCENE's accuracy comes from
the server-side replay itself. Each correction alone improves on it, balanced
sampling by $0.38$ AA and $0.10$ AIA points and logit adjustment by $0.44$ AA and
$0.71$ AIA points. The two act on different parts of training, sampling on the
data that enters each batch and logit adjustment on the bias of the classifier,
and applying both gives the highest AA and AIA, $0.70$ and $0.98$ points above
natural training. Combining the two corrections therefore resolves the class
imbalance of the server pool better than either of them alone.

\section{System Cost Analysis}
\label{app:cost}

In weight-communicating methods, each client trains the model on its local data
and uploads it, and the server only aggregates the updates. In ReSCENE, the
clients only condense surrogates, and most of the computation takes place at the
server. We therefore analyze its computation and communication cost in more depth.
Sec.~\ref{sec:resource} analyzes the cost on the client side, and
Sec.~\ref{sec:total_compute} analyzes the total cost of the whole federated system,
including the server.

\subsection{Client-Side Resource Accounting}
\label{sec:resource}

ReSCENE optimizes and communicates surrogates in place of training and
communicating the model, so its clients should spend less computation and less
bandwidth than under any weight-communicating method. We measure this directly,
accounting for every client-side resource, namely computation, persistent state, uplink, and
downlink, for all eight methods on the three benchmarks.

\begin{table}[h!]
\centering
\footnotesize
\setlength{\tabcolsep}{6pt}
\renewcommand{\arraystretch}{1.12}
\begin{tabular}{@{}ll cccc@{}}
\toprule
Dataset & Method & Client comp. & State mem. & Uplink & Downlink\\
        &        & (PFLOP/run)  & (MB)       & (MB/rd) & (MB/rd)\\
\midrule
\multirow{8}{*}{CIFAR-10}
 & FedAvg  & 32.68 {\scriptsize($\times$1.0)}  & 44.73   & 44.73 & 44.73\\
 & FedEWC  & 34.31 {\scriptsize($\times$1.0)}  & 134.13  & 44.73 & 44.73\\
 & FedLwF  & 41.32 {\scriptsize($\times$1.3)}  & 89.47   & 44.73 & 44.73\\
 & TARGET  & 72.20 {\scriptsize($\times$2.2)}  & 359.31  & 44.73 & 359.31\\
 & GLFC    & 48.43 {\scriptsize($\times$1.5)}  & 91.93   & 44.73 & 44.73\\
 & FedCBDR & 72.71 {\scriptsize($\times$2.2)}  & 63.17   & 44.73 & 44.73\\
 & Re-Fed  & 71.70 {\scriptsize($\times$2.2)}  & 57.02   & 44.73 & 44.73\\
 & \ours ReSCENE & \ours 3.59 {\scriptsize($\times$0.1)} & \ours 44.73 & \ours 0.25 & \ours 44.73\\
\midrule
\multirow{8}{*}{CIFAR-100}
 & FedAvg  & 33.24 {\scriptsize($\times$1.0)}   & 44.92   & 44.92 & 44.92\\
 & FedEWC  & 34.90 {\scriptsize($\times$1.0)}   & 134.68  & 44.92 & 44.92\\
 & FedLwF  & 43.20 {\scriptsize($\times$1.3)}   & 89.84   & 44.92 & 44.92\\
 & TARGET  & 84.30 {\scriptsize($\times$2.5)}   & 359.49  & 44.92 & 359.49\\
 & GLFC    & 203.18 {\scriptsize($\times$6.1)}  & 114.41  & 44.92 & 44.92\\
 & FedCBDR & 933.15 {\scriptsize($\times$28.1)} & 229.24  & 44.92 & 44.92\\
 & Re-Fed  & 146.46 {\scriptsize($\times$4.4)}  & 57.21   & 44.92 & 44.92\\
 & \ours ReSCENE & \ours 27.49 {\scriptsize($\times$0.8)} & \ours 44.92 & \ours 1.23 & \ours 44.92\\
\midrule
\multirow{8}{*}{TinyImageNet}
 & FedAvg  & 67.21 {\scriptsize($\times$1.0)}    & 45.12    & 45.12 & 45.12\\
 & FedEWC  & 70.58 {\scriptsize($\times$1.1)}    & 135.30   & 45.12 & 45.12\\
 & FedLwF  & 87.38 {\scriptsize($\times$1.3)}    & 90.25    & 45.12 & 45.12\\
 & TARGET  & 158.60 {\scriptsize($\times$2.4)}   & 1303.42  & 45.12 & 1303.42\\
 & GLFC    & 410.42 {\scriptsize($\times$6.1)}   & 286.86   & 45.12 & 45.12\\
 & FedCBDR & 1884.40 {\scriptsize($\times$28.0)} & 1519.68  & 45.12 & 45.12\\
 & Re-Fed  & 295.96 {\scriptsize($\times$4.4)}   & 143.43   & 45.12 & 45.12\\
 & \ours ReSCENE & \ours 56.20 {\scriptsize($\times$0.8)} & \ours 45.12 & \ours 9.83 & \ours 45.12\\
\bottomrule
\end{tabular}
\caption{Client-side resources with ResNet-18. Multipliers are relative to
FedAvg on the same dataset.}
\label{tab:resource}
\end{table}

\noindent\textbf{Setting.}\quad
Table~\ref{tab:resource} covers one complete run of the Table~\ref{tab:main}
protocol, $100$ rounds on CIFAR-10 and $200$ on CIFAR-100 and TinyImageNet with
$10$ participants per round, and counts compute as floating-point operations with
backward passes at twice the cost of forward passes. Client compute is the cost
of all client-side work over the run, summed over the participating clients.
State memory is the persistent client state between rounds, the model together
with any auxiliary model or replay data, and excludes transient autograd and
optimizer workspace. Uplink and downlink measure one message per round in each
direction.

\noindent\textbf{Analysis.}\quad
ReSCENE uses the least client computation and uplink on every benchmark and keeps
the smallest client state, tied with FedAvg (Table~\ref{tab:resource}). Its client
computation is $0.1\times$ that of FedAvg on CIFAR-10 and $0.8\times$ on CIFAR-100
and TinyImageNet, whereas every replay baseline exceeds FedAvg, since its local
training set grows with the replay data of every seen class. FedCBDR, which keeps
$150$ samples per class, reaches $28.0\times$ the client cost of FedAvg on
TinyImageNet, and GLFC and Re-Fed reach $6.1\times$ and $4.4\times$. The clients of
ReSCENE store only the model, as in FedAvg, whereas every continual-learning
baseline also stores an auxiliary model or replay data, up to $1{,}519.68$ MB for
FedCBDR on TinyImageNet. Its uplink is $0.25$ MB on CIFAR-10 and $9.83$ MB on
TinyImageNet against the $44.73$ to $45.12$ MB model update of every other method,
while its downlink equals that of FedAvg, since the server broadcasts the global
model. These savings follow from the design of ReSCENE, whose clients only
condense a fixed number of surrogates per class through the frozen global model,
keep no replay data, and upload the surrogates in place of the model.

\subsection{Total Computation across the Server and Clients}
\label{sec:total_compute}

Cross-device federated learning pairs many resource-constrained clients with a
resource-rich server~\citep{10.1561/2200000083}. ReSCENE moves the training from
these clients to the server and thus shifts the computational burden from the
clients to the server. We therefore compare the total computation of
the whole system, summing the server and all clients.

\begin{table}[h!]
\centering
\footnotesize
\setlength{\tabcolsep}{4pt}
\renewcommand{\arraystretch}{1.3}
\resizebox{\textwidth}{!}{%
\begin{tabular}{l cc | ccccc}
\toprule
& \multicolumn{2}{c|}{Asymptotic compute (per task)}
& \multicolumn{5}{c}{CIFAR-10 (per task)}\\
\cmidrule(lr){2-3}\cmidrule(lr){4-8}
Method & All clients & Server & All clients & Server & Total & Clients & Server\\
       &             &        & (PFLOP)     & (PFLOP) & (PFLOP) & share & share\\
\midrule
FedAvg  & $O(R K E D_{\mathrm{loc}} F)$ & $O(R K P)$
        & 6.54 & $\approx$0 & 6.54 & 100\% & 0\%\\
FedEWC  & $O(R K E D_{\mathrm{loc}} F)$ & $O(R K P)$
        & 6.86 & $\approx$0 & 6.86 & 100\% & 0\%\\
FedLwF  & $O(R K E D_{\mathrm{loc}} F)$ & $O(R K P)$
        & 8.26 & $\approx$0 & 8.26 & 100\% & 0\%\\
TARGET  & $O(R K E D_{\mathrm{loc}} F)$
        & $O(R_{\mathrm{syn}} S_{\mathrm{kd}} B_{\mathrm{syn}} F)$
        & 14.44 & 38.2 & 52.64 & 27.4\% & 72.6\%\\
GLFC    & $O(R K E (D_{\mathrm{loc}}{+}D_{\mathrm{mem}}) F)$ & $O(R K P + I K |\Cset_{t-1}| F_{\mathrm{enc}})$
        & 9.69 & $<$0.01 & 9.69 & $>$99\% & $<$1\%\\
FedCBDR & $O(R K E (D_{\mathrm{loc}}{+}D_{\mathrm{mem}}) F + N D_{\mathrm{loc}} F)$ & $O(R K P + N D_{\mathrm{loc}} d^2)$
        & 14.54 & $\approx$0 & 14.54 & 100\% & 0\%\\
Re-Fed  & $O(R K E (D_{\mathrm{loc}}{+}D_{\mathrm{mem}}) F + N E_{\mathrm{PIM}} D_{\mathrm{pool}} F)$ & $O(R K P)$
        & 14.34 & $\approx$0 & 14.34 & 100\% & 0\%\\
\rowcolor{oursbg}ReSCENE & $O(R K S\,\mathrm{IPC}\,|\Cset_t| F)$ & $O(R E_{\mathrm{srv}} |\B{\cup}S| F)$
        & 0.72 & 26.6 & 27.32 & 2.6\% & 97.4\%\\
\bottomrule
\end{tabular}
}
\caption{Total computation across the server and all clients, asymptotically per
task and on CIFAR-10 at task~2.}
\label{tab:total_compute}
\end{table}

\noindent\textbf{Setting.}\quad
Table~\ref{tab:total_compute} sums computation over the server and all $K$
participating clients, asymptotically and on CIFAR-10 at task~2 with one completed
task's buffer, using $E_{\mathrm{srv}}{=}100$ server epochs per round,
$R_{\mathrm{syn}}{=}100$ synthesis rounds, and the client figures of
Table~\ref{tab:resource} divided over the five tasks. In the asymptotic columns,
$R$ is the rounds per task, $N$ the total clients, $E$ and $E_{\mathrm{srv}}$ the
local and server epochs, $E_{\mathrm{PIM}}$ Re-Fed's personalized-informative-model
epochs, $D_{\mathrm{loc}}$ the local images of a client, $D_{\mathrm{mem}}$ the
replay images it stores (the exemplars of GLFC and FedCBDR and the cache of
Re-Fed), $D_{\mathrm{pool}}$ the previous-sample pool Re-Fed scores at a task
boundary, $F$ the per-image forward-and-backward cost, $F_{\mathrm{enc}}$ the same
for GLFC's LeNet-scale encode model, $I$ its gradient-inversion steps per
prototype, $|\Cset_{t-1}|$ the classes of the previous task, $d$ the feature
dimension of the leverage scores FedCBDR computes by SVD on the server,
$S_{\mathrm{kd}}$ TARGET's distillation steps per synthesis round,
$B_{\mathrm{syn}}$ its synthesis batch, $P$ the model parameters, $S$ the
condensation steps, and $|\B{\cup}S|$ the buffer-and-pool size.

\noindent\textbf{Analysis.}\quad
The client computation of the weight-communicating methods grows with the local
data $D_{\mathrm{loc}}$ of every participant, and with the replay data
$D_{\mathrm{mem}}$ for GLFC, FedCBDR, and Re-Fed, whereas that of ReSCENE depends
only on the condensation budget $\mathrm{IPC}{\times}|\Cset_t|$. ReSCENE therefore
gains the most when clients hold much local data or when a replay baseline stores
a large memory. Its server term grows with $E_{\mathrm{srv}}$ and the size
$|\B{\cup}S|$ of the buffer and pool, which grows with the participants only
through the $\mathrm{IPC}$ surrogates per class that each of them uploads. Adding
local data thus changes neither term of ReSCENE, and its server cost can be reduced
through $E_{\mathrm{srv}}$.

On CIFAR-10, the clients of ReSCENE spend $0.72$ PFLOP per task, $11\%$ of FedAvg's
$6.54$ PFLOP and $5\%$ of Re-Fed's $14.34$ PFLOP, and they carry only $2.6\%$ of the
total computation of ReSCENE. The remaining $97.4\%$ runs on the server, which puts
the resource-rich side of the system to use. The total of $27.32$ PFLOP is about
$4\times$ that of FedAvg, but Sec.~\ref{sec:server_budget} shows that ReSCENE
still leads every baseline when the server epochs are cut to a tenth, so this
server share can be reduced substantially. TARGET also mitigates forgetting at
the server, synthesizing data of earlier tasks by model inversion, but this costs
it $38.2$ PFLOP on the server, and its clients still spend $14.44$ PFLOP. ReSCENE
spends less on both sides, $26.6$ against $38.2$ PFLOP on the server and $0.72$
against $14.44$ PFLOP on the clients, and $27.32$ against $52.64$ PFLOP in total.

\noindent\textbf{Round latency.}\quad
In this work, we assume synchronous federated learning, in which the clients wait for the server
to broadcast the next model. In weight-communicating methods, the server only
averages the updates, so a round lasts as long as the slowest client takes to train
locally and upload its model. In ReSCENE, the server trains the global model for
$E_{\mathrm{srv}}$ epochs before the next broadcast, which adds a server phase to
every round, whereas each client only condenses its surrogates and uploads up to
$179\times$ less data. ReSCENE thus moves the time of a round from the slowest
client to the resource-rich server, and Sec.~\ref{sec:server_budget} shows that
this server phase can be shortened tenfold while ReSCENE still leads every
baseline.

\section{Privacy Analysis for ReSCENE}
\label{app:privacy}

In ReSCENE, a client synthesizes its raw local data into condensed surrogates
and transmits them to the server. We therefore analyze the privacy of the
synthetic surrogates that ReSCENE communicates. We show in
Sec.~\ref{app:privacy_formal} that the surrogate upload can be made
differentially private by the Gaussian mechanism of FedDM~\citep{Xiong_2023_CVPR},
and we measure in Sec.~\ref{app:privacy_emp} how much accuracy each method keeps
under the same client-side gradient perturbation.

\subsection{Differential Privacy of the Surrogate Upload}
\label{app:privacy_formal}

\noindent\textbf{Differential privacy.}\quad
A randomized mechanism $\mathcal{M}$ is $(\epsilon,\delta)$-differentially
private~\citep{10.1561/0400000042} if, for any two datasets $D,D'$ that differ in
a single record and any set of outcomes $O$,
\begin{equation}
\Pr[\mathcal{M}(D)\in O]\;\le\;e^{\epsilon}\,\Pr[\mathcal{M}(D')\in O]+\delta .
\label{eq:dp}
\end{equation}
We consider instance-level privacy, where $D$ and $D'$ are the local data of one
client that differ in a single image. One image can then change the distribution
of what the client uploads by at most a factor $e^{\epsilon}$, up to the slack
$\delta$, so an observer of the upload cannot tell whether that image was present.

\noindent\textbf{DP-SGD.}\quad
DP-SGD makes gradient descent differentially private by clipping the gradient of
each example to norm $C$ and adding Gaussian noise
$\mathcal{N}(0,\sigma^2C^2\mathbf{I})$ to their sum at every step. Its privacy over
many steps is guaranteed as follows.

\begin{theorem}[Privacy of DP-SGD~\citep{10.1145/2976749.2978318}]
\label{thm:dpsgd}
There exist constants $c_1$ and $c_2$ such that, given the sampling probability $q$
and the number of steps $T$, for any $\epsilon<c_1q^2T$, DP-SGD is
$(\epsilon,\delta)$-differentially private for any $\delta>0$ if
$\sigma\ge c_2\,q\sqrt{T\log(1/\delta)}/\epsilon$.
\end{theorem}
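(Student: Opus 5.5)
We would follow the moments-accountant argument of \citet{10.1145/2976749.2978318}. The plan has three steps. First, bound the log-moment of the privacy loss for a single noisy step. Second, compose these bounds additively over the $T$ steps. Third, convert the total log-moment into an $(\epsilon,\delta)$ guarantee by a Chernoff-type tail bound and then choose the moment order.

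For neighbouring datasets $D,D'$ and an outcome $o$ of a mechanism $\mathcal M$, define the privacy loss $c(o)=\log\bigl(\Pr[\mathcal M(D)=o]/\Pr[\mathcal M(D')=o]\bigr)$. Define its log-moment $\alpha_{\mathcal M}(\lambda)=\max_{D,D'}\log\mathbb E_{o\sim\mathcal M(D)}[\exp(\lambda c(o))]$. We would prove the two accountant properties.

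\emph{Composability.} For an adaptive sequence of mechanisms $\mathcal M_1,\dots,\mathcal M_T$, we have $\alpha(\lambda)\le\sum_{i}\alpha_{\mathcal M_i}(\lambda)$. This follows because the total loss is the sum of the per-step losses. Conditioning on the earlier outputs then factorizes the exponential moment step by step.

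\emph{Tail bound.} By Markov's inequality applied to $\exp(\lambda c)$, we get $\Pr[c\ge\epsilon]\le\exp(\alpha(\lambda)-\lambda\epsilon)$. Outcomes with $c<\epsilon$ satisfy the $e^{\epsilon}$ inequality of Eq.~\eqref{eq:dp}, and the remaining mass is absorbed into $\delta=\min_\lambda\exp(\alpha(\lambda)-\lambda\epsilon)$.

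The core step is the per-step bound for the subsampled Gaussian mechanism. Clipping to norm $C$ and rescaling by $C$ reduces one step to a one-dimensional comparison. The first distribution is $\mu_0=\mathcal N(0,\sigma^2)$, when the differing record is absent. The second is the mixture $\mu=(1-q)\mu_0+q\mu_1$ with $\mu_1=\mathcal N(1,\sigma^2)$, when it is present with sampling probability $q$. We need to bound $\mathbb E_{\mu_0}[(\mu/\mu_0)^{\lambda+1}]$ and $\mathbb E_{\mu}[(\mu/\mu_0)^{\lambda}]$. Expanding $(1+q(\mu_1-\mu_0)/\mu_0)^{\lambda+1}$ binomially has the following effect on each term:
\begin{itemize}
\item the zeroth-order term contributes $1$;
\item the first-order term vanishes because $\mathbb E_{\mu_0}[(\mu_1-\mu_0)/\mu_0]=0$;
\item the second-order term is $\binom{\lambda+1}{2}q^2(e^{1/\sigma^2}-1)$, which is $O(q^2\lambda^2/\sigma^2)$;
\item the higher-order terms are controlled by explicit Gaussian integrals of $|\mu_1-\mu_0|^t/\mu_0^{t-1}$, each bounded by $O\bigl(q^t\lambda^t\sigma^{-t}\bigr)$ times geometric factors.
\end{itemize}
Under $q<1/(16\sigma)$ and $\lambda\le\sigma^2\log(1/(q\sigma))$, these terms give $\alpha(\lambda)\le q^2\lambda(\lambda+1)/((1-q)\sigma^2)+O(q^3\lambda^3/\sigma^3)$.

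We expect this step to be the main obstacle. Bounding the cubic and higher terms uniformly, and verifying that the reverse ordering $(D',D)$ gives the same bound, is where the constraints on $q$, $\sigma$ and $\lambda$ enter. We would handle it by splitting the integration domain at $|z|$ of order $\sigma\sqrt{\lambda}$ and bounding each region separately.

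Finally, composition gives $\alpha(\lambda)\le c\,Tq^2\lambda^2/\sigma^2$ for an absolute constant $c$. It then suffices to ensure $Tq^2\lambda^2/\sigma^2\le\lambda\epsilon/2$ and $\exp(-\lambda\epsilon/2)\le\delta$. We choose $\lambda=2\log(1/\delta)/\epsilon$, which makes the second condition hold. The first condition then becomes $\sigma\ge c_2\,q\sqrt{T\log(1/\delta)}/\epsilon$. It remains to check that this $\lambda$ lies within the admissible range $\lambda\le\sigma^2\log(1/(q\sigma))$. This check is precisely where the hypothesis $\epsilon<c_1q^2T$ is used, and it fixes the constants $c_1$ and $c_2$.
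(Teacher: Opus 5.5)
Your outline follows the moments-accountant proof of Abadi et al., and the paper relies on exactly that: Theorem~\ref{thm:dpsgd} is imported by citation and not proved in the paper. So the route is the same, but your final step has a genuine gap.

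You say the only remaining check is $\lambda\le\sigma^2\log(1/(q\sigma))$. Your per-step bound, however, also rests on three conditions that the hypotheses do not supply.
\begin{itemize}
\item $\sigma\ge 1$: your estimate $e^{1/\sigma^2}-1=O(1/\sigma^2)$ uses it.
\item $q<1/(16\sigma)$.
\item $\lambda$ a positive integer: the binomial expansion of $(1+x)^{\lambda+1}$ is finite only then.
\end{itemize}

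The condition $q<1/(16\sigma)$ is an upper bound on $\sigma$, so it can never follow from the lower bound $\sigma\ge c_2q\sqrt{T\log(1/\delta)}/\epsilon$. At minimum you need two further ingredients. First, the per-step log-moment is non-increasing in $\sigma$, because adding independent Gaussian noise is post-processing; this reduces the problem to the threshold value of $\sigma$. Second, you need an assumption guaranteeing $q\sigma<1/16$ at that threshold.

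Likewise, $\lambda=2\log(1/\delta)/\epsilon\ge1$ requires $\epsilon\le 2\log(1/\delta)$, and some restriction of this kind cannot be removed. Take $T$ so that the threshold noise equals, say, $2$. This is compatible with $\sigma\ge1$, with $q\sigma<1/16$ for small $q$, and with $\epsilon<c_1q^2T$ once $\epsilon$ is large. Now let $\epsilon\gg c_2^2\log(1/\delta)$. The expected privacy loss is roughly $Tq^2(e^{1/\sigma^2}-1)/2\ge \epsilon^2/(2c_2^2\log(1/\delta))$, which exceeds $\epsilon$, and the loss concentrates around this mean, so $(\epsilon,\delta)$-DP fails.

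The statement, like Theorem~1 of Abadi et al. (whose proof also ends with ``it is easy to verify''), must therefore be read with standing assumptions such as $\sigma\ge1$, $q\sigma<1/16$ and $\epsilon=O(\log(1/\delta))$. Your last step should state and use these, rather than attribute everything to $\epsilon<c_1q^2T$.

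There is also a smaller slip in the per-step bound. The two quantities you list, $\mathbb E_{\mu_0}[(\mu/\mu_0)^{\lambda+1}]$ and $\mathbb E_{\mu}[(\mu/\mu_0)^{\lambda}]$, are the same integral $\int\mu^{\lambda+1}\mu_0^{-\lambda}$. The other direction you need is $\mathbb E_{\mu_0}[(\mu_0/\mu)^{\lambda}]=\mathbb E_{\mu}[(\mu_0/\mu)^{\lambda+1}]$, in which you divide by the mixture. Abadi et al.\ treat this as the harder case, and the key step there is $\mu\ge(1-q)\mu_0$. That inequality is the source of the factor $1/(1-q)$ in the bound you quote, which your second-order computation $\binom{\lambda+1}{2}q^2(e^{1/\sigma^2}-1)$ does not produce.
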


\noindent\textbf{Condensation as DP-SGD.}\quad
Write the feature-matching term of the client loss for class $c$ as
$\mathcal{L}_c=\lVert\mu^{\mathrm{r}}_c-\mu^{\mathrm{s}}_c\rVert^2$, where
$\mu^{\mathrm{r}}_c=\tfrac{1}{|\mathcal{R}|}\sum_{i\in\mathcal{R}}\phi(x_i)$ is the mean feature of a
real minibatch $\mathcal{R}$ under the embedding $\phi$ of the frozen model and
$\mu^{\mathrm{s}}_c$ is the mean feature of the surrogate set $\mathcal{S}$. As
FedDM shows, the gradient with respect to the surrogate pixels decomposes into one
term per real image,
\begin{equation}
\nabla_{\mathcal{S}}\mathcal{L}_c
=\frac{1}{|\mathcal{R}|}\sum_{i\in\mathcal{R}}\tilde g(x_i),
\qquad
\tilde g(x_i)=2\,\big(\partial_{\mathcal{S}}\mu^{\mathrm{s}}_c\big)^{\!\top}
\big(\mu^{\mathrm{s}}_c-\phi(x_i)\big) ,
\label{eq:decomp}
\end{equation}
and the logit-matching term decomposes in the same way. Each $\tilde g(x_i)$
depends on a single real image, so the optimization of the surrogates is an
instance of DP-SGD, with the surrogates in place of the model parameters, and
Theorem~\ref{thm:dpsgd} applies to it.

\begin{proposition}[Differential privacy of ReSCENE]
\label{prop:dp_rescene}
Suppose that every client initializes its surrogates from random noise and
optimizes them by DP-SGD, clipping each per-example term $\tilde g(x_i)$ of
Eq.~\eqref{eq:decomp} to norm $C$ and adding $\mathcal{N}(0,\sigma^2C^2\mathbf{I})$
to their sum. If $\sigma$ satisfies the condition of Theorem~\ref{thm:dpsgd}, then,
in each communication round, the uploads of all clients and the global model and
buffers that the server derives from them are $(\epsilon,\delta)$-differentially
private with respect to the local data of every client.
\end{proposition}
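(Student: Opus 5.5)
The argument has three steps: per-round privacy of each client's condensation via Theorem~\ref{thm:dpsgd}, an extension from one client's upload to everything the server derives from it by post-processing, and a combination across clients by parallel composition.

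\textbf{Step 1: one client in one round.} Fix client $k$ and round $r$. The first point is to check that the condensation loop is literally DP-SGD. Eq.~\eqref{eq:decomp} writes the surrogate gradient as an average of per-example terms $\tilde g(x_i)$, each depending on a single real image. The logit-matching term decomposes the same way, so I concatenate the feature and logit contributions into one per-example vector before clipping to norm $C$. Two further things need care. The random perturbation $\theta'$ in Eq.~\eqref{eq:dm} is drawn independently of the data, so conditioning on it leaves the sensitivity analysis unchanged. The frozen model $\theta_t^{(r-1)}$ is public from the client's point of view: it was broadcast by the server, and by induction over rounds it is itself a DP output. With these in place, Theorem~\ref{thm:dpsgd} applies with the sampling rate $q$ of the real minibatch and the number of condensation steps $T$. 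The random-noise initialization matters because it makes the starting state independent of the data. Since per-class sets use disjoint data $\D_{t,c}^k$, they combine in parallel, and the union $S_t^{k,(r)}$ is $(\epsilon,\delta)$-DP with respect to $\D_t^k$.

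\textbf{Step 2: post-processing.} Everything else the server computes is a function of the uploads plus data-independent randomness. This covers the pool $S_t$, the server training on $\B_{1:t-1}\cup S_t$, temporal herding Eq.~\eqref{eq:herding}, and the broadcast model. Herding uses the task-end model, which is again derived from uploads, so it is post-processing as well. By the post-processing property of DP, the global model and buffers inherit $(\epsilon,\delta)$ with respect to client $k$'s data.

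\textbf{Step 3: all clients.} Neighbouring datasets differ in one image of one client. Every other client's upload has an output distribution that does not depend on that image, given the public broadcast model. The joint release is therefore $(\epsilon,\delta)$-DP with respect to each client's data. This is parallel composition over disjoint datasets.

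\textbf{Main obstacle.} I expect the subtle part to be the phrase ``in each communication round''. The broadcast model in round $r$ depends on client $k$'s earlier uploads, so the guarantee is genuinely per round. Across rounds and tasks the budget accumulates by sequential composition, and I would state that explicitly rather than claim a cumulative guarantee. A secondary check is that the clipped per-example quantity must be computed under the fixed $\theta'$ of the step, which requires that $\theta'$ not be chosen adaptively from the data.
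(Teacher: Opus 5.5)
Your proposal follows the paper's proof. Theorem~\ref{thm:dpsgd} with the data-independent random-noise initialization makes each client's upload $(\epsilon,\delta)$-DP, parallel composition over the clients' disjoint data covers all uploads of a round, and post-processing carries the guarantee to the buffers and the global model. Your extra points (per-class parallel composition, data-independent $\theta'$, treating the broadcast model as an adaptively composed input, and noting that the model and buffers depend on many rounds, so the budget composes sequentially) are details the paper's short proof leaves implicit, not a different route.
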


\noindent\textit{Proof.}
By Theorem~\ref{thm:dpsgd}, the surrogates of each client are
$(\epsilon,\delta)$-differentially private with respect to its local data, since
the initialization from random noise does not depend on these data. The clients
hold disjoint data, so by parallel composition~\citep{10.1145/1559845.1559850}, the
uploads of a round are $(\epsilon,\delta)$-differentially private for every
client, as in FedDM. The server accumulates these uploads, compresses each task
into a buffer by temporal herding, and trains the global model on the buffer and
the pool without accessing the data of a client again. By
post-processing~\citep[Prop.~2.1]{10.1561/0400000042}, the buffers and the global
model therefore satisfy the same $(\epsilon,\delta)$.

\noindent\textbf{Server-side reuse at no privacy cost.}\quad
The forgetting mitigation of ReSCENE takes place entirely at the server after
communication. The server stores the uploaded surrogates, compresses them into
buffers, and replays these buffers in every later round, and none of these steps
accesses the data of a client again. The global model that the server broadcasts is
also derived from the surrogates alone, so by Proposition~\ref{prop:dp_rescene} the
storage and replay raise no further privacy concern, and the privacy budget is spent
once, at upload time.

\subsection{Empirical Robustness to Gradient Perturbation}
\label{app:privacy_emp}

We measure how much accuracy each method keeps when its client-side computation is
perturbed by the same Gaussian noise, following the evaluation of
FedDM~\citep{Xiong_2023_CVPR}.

\noindent\textbf{Setting.}\quad
On CIFAR-10 with $\beta{=}0.5$, we sweep the noise multiplier
$\sigma\in\{0,1,3,5\}$ with clip norm $C{=}5$. As in the experiments of FedDM, ReSCENE
initializes the surrogates from sampled real images, clips the pixel gradient of
each class to norm $C$, and adds Gaussian noise of scale $\sigma C$ divided by the
size of the real minibatch, using class-wise real minibatches of up to $256$
images. This protocol compares the accuracy of each method under the same
perturbation rather than certifying a common $(\epsilon,\delta)$. For the model-update baselines, we perturb local training in the
same way, and for FedCBDR we additionally protect the pseudo-feature
representations. Per-example gradients are undefined under BatchNorm, whose batch
statistics couple the examples of a minibatch, so the model-update baselines use
GroupNorm in place of BatchNorm at every $\sigma$, including $\sigma{=}0$. Their
$\sigma{=}0$ entries therefore differ from Table~\ref{tab:main}, which uses
BatchNorm. ReSCENE keeps the model of Table~\ref{tab:main}, since its perturbation
acts on the surrogate pixels while the model stays frozen. Its $\sigma{=}0$ entry differs slightly from
Table~\ref{tab:main}, since the privacy runs use class-wise real minibatches of up
to $256$ images.

\begin{table}[h]
\centering
\small
\renewcommand{\arraystretch}{1.12}
\setlength{\tabcolsep}{6pt}
\sbox{\privtop}{\begin{tabular}{lcccccccc}
\toprule
& \multicolumn{2}{c}{$\sigma{=}0$} & \multicolumn{2}{c}{$\sigma{=}1$} & \multicolumn{2}{c}{$\sigma{=}3$} & \multicolumn{2}{c}{$\sigma{=}5$}\\
\cmidrule(lr){2-3}\cmidrule(lr){4-5}\cmidrule(lr){6-7}\cmidrule(lr){8-9}
Method & AA & AIA & AA & AIA & AA & AIA & AA & AIA\\
\midrule
FedAvg & 16.10 & 36.77 & 10.00 & 29.70 & 10.00 & 23.91 & 10.00 & 23.70\\
FedCBDR & 34.12 & 56.03 & 30.45 & 46.85 & 10.90 & 31.70 & 9.83 & 27.42\\
Re-Fed & 37.81 & 46.00 & 23.05 & 31.50 & 10.57 & 26.36 & 10.89 & 25.22\\
\rowcolor{oursbg}ReSCENE & \textbf{67.86} & \textbf{72.42} & \textbf{64.64} & \textbf{71.59} & \textbf{59.73} & \textbf{67.91} & \textbf{55.13} & \textbf{65.97}\\
\bottomrule
\end{tabular}}\usebox{\privtop}

\smallskip
\setlength{\tabcolsep}{0pt}\sbox{\privbot}{\begin{tabular}{lcccccc}
\toprule
Method & T0 & T1 & T2 & T3 & T4 & AA\\
\midrule
FedAvg & 0.00 & 0.00 & 0.00 & 0.00 & 50.00 & 10.00\\
FedCBDR & 0.00 & 0.00 & 0.00 & 0.00 & 49.15 & 9.83\\
Re-Fed & 8.10 & 0.00 & 0.00 & 0.00 & 46.35 & 10.89\\
\rowcolor{oursbg}ReSCENE & \textbf{64.15} & \textbf{36.45} & \textbf{43.75} & \textbf{62.15} & \textbf{69.15} & \textbf{55.13}\\
\bottomrule
\end{tabular}}%
\setlength{\tabcolsep}{\dimexpr(\wd\privtop-\wd\privbot)/14\relax}%
\begin{tabular}{lcccccc}
\toprule
Method & T0 & T1 & T2 & T3 & T4 & AA\\
\midrule
FedAvg & 0.00 & 0.00 & 0.00 & 0.00 & 50.00 & 10.00\\
FedCBDR & 0.00 & 0.00 & 0.00 & 0.00 & 49.15 & 9.83\\
Re-Fed & 8.10 & 0.00 & 0.00 & 0.00 & 46.35 & 10.89\\
\rowcolor{oursbg}ReSCENE & \textbf{64.15} & \textbf{36.45} & \textbf{43.75} & \textbf{62.15} & \textbf{69.15} & \textbf{55.13}\\
\bottomrule
\end{tabular}
\caption{Privacy robustness on CIFAR-10 ($\beta{=}0.5$). Top, AA and AIA (\%)
at each noise multiplier $\sigma$. Bottom, final per-task accuracy (\%) at
$\sigma{=}5$.}
\label{tab:privacy_rescene}
\end{table}

\noindent\textbf{Results.}\quad
Accuracy decreases for every method as $\sigma$ grows
(Table~\ref{tab:privacy_rescene}), and the model-update baselines lose AA and AIA
sharply. FedAvg falls to $10\%$ AA from $\sigma{=}1$ onward, and FedCBDR and Re-Fed
reach a similar value from $\sigma{=}3$. At $\sigma{=}5$, FedAvg and FedCBDR drop
to $0\%$ on the earlier tasks T0 to T3, Re-Fed keeps only $8.10\%$ on T0, and all
three keep about $50\%$ only on the last task, so their AA of about $10\%$ comes
almost entirely from the last task. The perturbation thus removes
the forgetting mitigation that these baselines are designed to provide.

ReSCENE keeps $55.13\%$ AA at $\sigma{=}5$, $81.2\%$ of its $67.86\%$ at
$\sigma{=}0$, and its accuracy stays spread over the earlier tasks. We attribute
this to where the noise enters. In ReSCENE, the noise perturbs only the surrogates
that a client uploads, and the server trains on them without adding further noise
(Proposition~\ref{prop:dp_rescene}). In the model-update baselines, the noise perturbs
every local update that forms the global model, so it enters the model in every
round.

\end{document}